\documentclass[journal]{IEEEtran}
\pdfoutput=1

%%% IEEE 
\usepackage{cite}

\interdisplaylinepenalty=2500
\usepackage{algorithmic}

\usepackage{eqparbox} 
\usepackage[tight,footnotesize]{subfigure}
\usepackage[caption=false,font=footnotesize]{subfig}
\usepackage{fixltx2e}  
\usepackage{stfloats} 
\usepackage{url}

%%%%%  
% \usepackage[cmex10]{amsmath}
\usepackage[cmex10]{amsmath}
\usepackage{amssymb}

\usepackage{amsthm}    

\usepackage{bm}
\usepackage{array}

% Kenan   
%\usepackage{subfigure}
%\usepackage{subfig}   
 
\usepackage{graphicx}     

\usepackage{epstopdf}
  
\usepackage{comment}     
\usepackage{pst-all}   
\usepackage{colortbl}  
\usepackage{tabularx}  
\usepackage{multirow}   
\usepackage{algorithm}

\usepackage[english]{babel}
\usepackage[latin5]{inputenc}

\newcommand{\E}[1]{{E\left[ #1 \right]} }  
\newcommand{\partiald}{\;d}

\newtheorem{lem}{Lemma}
\newtheorem{cor}{Corollary}

% Kenanv
%\theoremstyle{definition}
\newtheorem{defn}{Definition}
 
\newtheorem{exam}{Example}

% these is not italic 
% http://mat140.bham.ac.uk/rwk/latex/thm.pdf
%\theoremstyle{definition}  % bold heading, plain text
\newtheorem{rem}{Remark}

\newcounter{foox}
\setcounter{foox}{1}
\newcommand{\Note}[1]{{ \bigskip \color{blue}  \hrule TODO \arabic{foox}: #1 
\stepcounter{foox}
\hrule\color{black} \bigskip} }

\newcommand{\LL}[2]{\mathcal{L}_{#1}(#2)  }  % #1: const, #2: variable
\newcommand{\HE}[2]{\mathcal{H}_{#1}(#2)  }  % #1: const, #2: variable

\newcommand{\Alpha}{alpha }
\newcommand{\Beta}{beta }

\begin{document}

\title{Generalized Beta Divergence}

%\author{Y.K.~Yılmaz, ~\IEEEmembership{Member,~IEEE}
%\author{Y. Kenan.~Yılmaz}

\author{Y. Kenan~Yılmaz   ~\IEEEmembership{kenan@sibnet.com.tr}     
\thanks{This work is dedicated to
Gezi Park resisters in Istanbul's Taksim Square, who have been protesting for days against authoritarianism. 
}
}

% The paper headers
%\markboth{ArVix,~Vol.~03, No.~22, AAAA~2012, \today}%
%{Shell \MakeLowercase{\textit{et al.}}: Bare Demo of IEEEtran.cls for Journals}

\maketitle
\IEEEpeerreviewmaketitle
   
\begin{abstract} 
%\boldmath
This paper generalizes beta divergence beyond its classical form associated with power variance functions of Tweedie models. Generalized form is represented by a compact definite integral as a function of variance function of the exponential dispersion model. This  compact integral form simplifies derivations of many properties such as scaling, translation and expectation of the beta divergence. Further, we show that beta divergence and (half of) the statistical deviance are equivalent measures.

\end{abstract} 

\begin{IEEEkeywords}
Beta divergence, alpha divergence, Tweedie models, dispersion models, variance functions, deviance.
\end{IEEEkeywords}

\DeclareGraphicsExtensions{.eps,.jpg}

\section{Introduction}

Divergences and distributions are deeply related concepts studied extensively in various fields. This paper is another attempt that casts their relations specifically into that of beta divergences and dispersion models and studies  accordingly. The main consequence of this study is that beta divergence and (half of) statistical deviance are represented identical equations and they are, therefore,  equivalent measures. In this respect, formulation of beta divergence is generalized and thus is extended beyond its Tweedie related classical forms \cite{basu1998},\cite{eguchi2001},\cite{cichocki2010},\cite{cichocki11}  and is aligned with exponential dispersion models.  This is achieved  by defining beta divergences as a function of so-called {\it variance functions} of exponential dispersion models. One  immediate consequence is that we can compute beta divergence for non-Tweedie models such as negative binomial or hyperbolic secant distribution. Another consequence is compact integral representation of beta divergence  
 \begin{align*}
  d_\beta(x,\mu) &=  \int_\mu^x  \frac{x-t}{v(t)}    \partiald t  
\end{align*}
where $v(t)$ is  variance functions of the dispersion models.
This form gives a simple and intuitive way for statistical interpretation of beta divergence  by decomposing it  into the difference of two integrals 
 \begin{align*}
  d_\beta(x,\mu) &=  \int_\mu^x  \frac{x-t}{v(t)}    \partiald t = 
 \underbrace{ \int_{\mu_0}^x  \frac{x-t}{v(t)}    \partiald t}_{\LL{x}{x}}  - 
 \underbrace{\int_{\mu_0}^\mu  \frac{x-t}{v(t)}    \partiald t}_{\LL{x}{\mu}} 
\end{align*}
which is equal to the log-likelihood ratio of the full model to the parametric model
\begin{align*}
   \LL{x}{x} - \LL{x}{\mu} = \frac{1}{2} d_\nu (x,\mu)
\end{align*}
and that is half of the unit deviance $d_\nu$ by definition. This way, beta divergence is  linked to half of statistical deviance. Interestingly, quasi-log-likelihood, a deeply related concept to deviance, is defined by an identical integral form by Wedderburn in 1974 \cite{wedderburn1974} as (adapted notation)
 
%\begin{align*}
%   \frac{ \partial K(x,\mu)  }{ \partial \mu} = 
%     \frac{x - \mu}{v(\mu)}
%\end{align*}
%or equivalently
%
\begin{align*}
    K(x,\mu) = \int^{\mu} 
     \frac{x - t}{v(t)} d t  + f(x)
\end{align*}

 There is a rich literature on connection of divergences and distributions.
%  and on beta divergences. 
  In one study Banerjee et al. showed the
bijection between regular exponential family distributions and the Bregman divergences \cite{banerjee05}. As a special case of this bijection, connection of beta divergences and Tweedie distributions  has  been briefly remarked by \cite{cichocki09}  and has been specifically studied in a recent report \cite{Yilmaz2012}.

%Cichocki et al. published several seminal papers on generalized alpha and beta divergences, as such \cite{cichocki11,cichocki2010}. 

% \Note{
% Learning algm.
% Basu, egucihi,: \cite{basu1998},\cite{eguchi2001}, 
% 
%  cedric: \cite{cedric} 
%  
%  bizim icml c \cite{simsekli2013}, 
%  }
%  
 
 The attractive point of studying beta divergence is its generalization capability for learning algorithms. This capability has already been exploited in matrix and tensor factorizations by the researchers such as 
 \cite{fevotte10}, \cite{cichocki2010}, \cite{yilmaz11}, and very recently \cite{simsekli2013}. The key point of this generalization is minimization of beta divergence, whose underlying distribution is parametrized by a scalar index, as in the case of Tweedie models.  The net effect of our work would be replacing this distribution index by so-called {\it variance function} to deal with broad class of distributions.

To sum up, intended purpose of this paper is to facilitate generalized approach for designing learning algorithms that  optimize beta divergences. For this, the paper aims to gain some insight into various properties of beta divergence. The main contributions are as follows.
 
%Intended purpose of this paper is  to gain some insight into various properties of beta divergence. The main contributions are as follows.
\begin{itemize}
  \item The beta divergence is extended and linked to exponential dispersion models beyond Tweedie family. As a result, a statement like 'beta divergence for binomial distribution' becomes reasonable and it is equal to 
\begin{align*}
  d_\beta(x,\mu) &= x\log \frac{x}{\mu}  + (1-x)\log\frac{1-x}{1-\mu} 
\end{align*} 
   
  \item Various functions including beta divergences, alpha divergences, cumulant functions, dual cumulant functions are all expressed in similar definite integral forms.
 
  \item Derivations of certain properties of beta divergences  are simplified by using their integral representations. For example, connection of \Beta divergence $d_\beta(x,\mu)$ and its scaled form $d_\beta(x/c,\mu/c)$, that is 
\begin{align*}
 d_\beta(x,\mu) =  \frac{c^2}{f(c)} d_\beta(x/c,\mu/c) %, \qquad c\in \mathbb{R}_+
\end{align*}  
 can be simply shown by change of variables in the integral. 

\item The relation of beta divergence and unit deviance has already been studied in the scope of Tweedie models \cite{Yilmaz2012}. Here this connection is shown in a broader scope as
\begin{align}
  d_\nu(x,\mu) = 2 d_\beta(x,\mu) = 2  \int_\mu^x \frac{x-t}{ v(t)} dt
\end{align}
  
 \item Finally, we  present many examples that apply the results for the Tweedie models, which most could be considered as corollaries.
  
 \end{itemize}

This paper is organized as follows. 
Section 2 introduces the notation very briefly. Section 3 gives some background information about dispersion models, cumulant functions, Legendre duality and Bregaman divergences. Section 4 identifies basic elements such as canonical parameter $\theta$ and dual cumulant function in definite integral forms. Section 5 is all about beta divergence as its generic integral forms, along with its properties such as scaling and transformation. Section 6, then, links beta divergence to log-likelihood and statistical deviance. Finally, in Appendix, similar integral forms are given for the alpha divergence.

\section{Notation}

In particular, $d_\phi(x,\mu)$ is the Bregman divergence generated by the convex function $\phi(\cdot)$. Likewise, $d_f(x,\mu)$ denotes $f$-divergence generated by the convex function $f(\cdot)$. As special cases, $d_\alpha (x,\mu)$ and $d_\beta(x,\mu)$  denote alpha  and beta  divergences. Similarly $d_\nu(x,\mu)$ denotes the statistical deviance.  Related with the deviance, quasi-log-likelihood of the parametric model is denoted by $\LL{x}{\mu}$ whereas $\LL{x}{x}$ denotes the quasi-log-likelihood of the 'full' model. 
%In entropy section $\HE{x}{\mu}$ is the entropy of the variable $x$ whereas $\E{x}$ is its expectation. 
Without loss of generality, we  consider only scalar valued functions, and consider only univariate variables  whereas the work can easily be extended to multivariate case. %Also here, $\phi'$ and $\phi''$ denotes the first and the second derivatives of the function $\phi$.
In other words, we assume  $\mu,\theta \in \mathbb{R}$ and correspondingly when referred we consider only ordinary scalar product as $\mu\theta$  rather than the inner product $\left\langle \bm{\mu}, \bm{\theta }\right\rangle $.

\section{Background}

\subsection{Dispersion Models}

{\it (Reproductive) exponential dispersion models } $EDM(\theta,\varphi)$ are two-parameter linear exponential family distributions defined as
% and their history goes back to Tweedie's unnoticed work in 1947 \cite{tweedie1947}. Nelder and Wedderburn, in 1972, published GLMs \cite{nelder1972} without any references to Tweedie's work where their error distribution formulation was similar to Tweedie's formulation. In 1982 Morris used the term {\it Natural Exponential Models} (NEF) \cite{morris1982}, and 1987 Jorgensen \cite{jorgensen87} gave the name Tweedie distribution. By definition, {\it Exponential Dispersion Models } (EDM) are defined as 
\begin{align} 
   p(x; \theta,\varphi) = h(x,\varphi)exp \left\{ \varphi^{-1} \left( \theta x - \psi(\theta) \right)   \right\} 
\end{align}
Here $\theta \in \Theta$  is the {\it canonical (natural) parameter} with $\Theta$ being the canonical parameter domain. $\varphi$ is the {\it dispersion parameter} as  $\varphi > 0$ and $\psi$ is the {\it cumulant function} or {\it cumulant generator} that is inherently related to cumulant generating function. The {\it expectation (mean) parameter} is denoted by $\mu \in \Omega$ with $\Omega$ being the mean parameter domain. In this paper we assume mean parameter domain is identical to the convex support of the variable $X$ and thus we write $x,\mu \in \Omega$. The expectation parameter $\mu$ is the first cumulant and is tied  to the canonical parameter $\theta$ with the differential equation so-called {\it mean value mapping} \cite{jorgensen1997}
\begin{align}
   \mu(\theta) =  \frac{\partiald \psi(\theta)}{\partiald\theta \label{eq.expectation}}   
\end{align} 
which can be obtained after differentiating both side of the equation  wrt $\theta$
\begin{align}
  1=  \int  h(x,\varphi) \exp \{\varphi^{-1} \left( \theta x - \psi(\theta) \right)\} \;dx
\end{align}
Likewise the inverse function $(\mu(\theta))^{-1}$ is so-called {\it inverse mean value mapping} \cite{jorgensen1997}.

\begin{comment}

\Note{

Buradan cumulant nasıl çıkar:

\begin{align}
  1=  \int dx\; h(x,\varphi) \exp \{x\theta - \psi(\theta)\}
\end{align}

\begin{align}
  \log 1 &=  \log \int dx\; h(x,\varphi) \exp \{x\theta - \psi(\theta)\} \\
  0 &= - \psi(\theta) + \log \int dx\; h(x,\varphi) \exp \{x\theta \} \\
\end{align}
Then 
\begin{align}
   \psi(\theta) = \log \int dx\; h(x,\varphi) \exp (x\theta ) 
\end{align}

}
\end{comment}

\subsection{Dual Cumulant Function}

Similar to  one-to-one correspondence between $\theta$ and $\mu$  that span dual spaces $\Theta$ and $\Omega$, cumulant function $\psi(\cdot)$ has a conjugate dual form  denoted by $\phi(\cdot)$. This is the function that casts and specializes Bregman divergence to \Beta divergence. Similarly, when applied to Csizar $f$-divergence, it is specialized to Amari \Alpha divergence \cite{cichocki11}.  One other interesting propery of this function is that it is connected to the entropy of the underlying distribution \cite{wainwright08}, hence it is also called as (negative) entropy function. In this paper we simply refer as dual cumulant function, defined as follows

 \begin{align}\label{phi1}
    \phi(\mu) = \sup_{\theta \in \Theta} \{ \mu\theta - \psi(\theta)\}  
 \end{align}
 
%On the other hand, throughout the paper we will use simpler expression for the dual cumulant function that does not include the supremum given as follows.

%. The following lemma gives an alternative expression for dual cumulant function.  

Dual (conjugates) cumulant function has the following properties.
\begin{enumerate}
  \item Derivative of the dual cumulant function is the differential equation
\begin{align}\label{eqPhi} 
    \frac{\partiald \phi(\mu)}{\partiald \mu} = \theta(\mu)  =  \arg\sup_{\theta}  \mu\theta - \psi(\theta) 
\end{align}

\item The maximizing argument is as
\begin{align}\label{eqTheta1}
   \theta^* = \theta(\mu) = (\psi'(\theta))^{-1}(\mu)
\end{align}

\item The dual cumulant function can be computed as 
\begin{align}\label{dualCum}
  \phi(\mu) &=  \mu\theta(\mu) - \psi(\theta(\mu)) 
 %  &= \mu (\psi'(\theta))^{-1}(\mu) - \psi((\psi'(\theta))^{-1}(\mu))
\end{align}
 
\end{enumerate}

\begin{comment}

\begin{lem}Dual cumulant function can be expressed as 
\begin{align}\label{dualCum}
  \phi(\mu) &=  \mu\theta(\mu) - \psi(\theta(\mu)) 
 %  &= \mu (\psi'(\theta))^{-1}(\mu) - \psi((\psi'(\theta))^{-1}(\mu))
\end{align}
\end{lem}

\begin{proof}

Derivative of the dual cumulant function is the differential equation
\begin{align}\label{eqPhi} 
    \frac{\partiald \phi(\mu)}{\partiald \mu} = \theta(\mu)  =  \arg\max_{\theta}  \mu\theta - \psi(\theta) 
\end{align} 
due to, by definition, extending the supremum  and finding the maximizing argument. Then after optimizing by taking the derivative wrt $\theta$ and equating to zero 
\begin{align}
  \frac{d \theta(\mu)}{d \theta} = \mu-\psi'(\theta) = 0
\end{align}
and solving for $\theta$  
\begin{align}\label{eqTheta1}
   %\mu-\psi'(\theta) = 0 \qquad \Rightarrow \qquad 
   \theta^* = \theta(\mu) = (\psi'(\theta))^{-1}(\mu)
\end{align}
we obtain {\it mean-value mapping} given in eq \eqref{eq.expectation} as
$\mu(\theta) = d\psi(\theta)/ d\theta$. 
What this equation implies is that the maximizing $\theta$ is the one parametrized by $\mu$. 
Consequently we may replace $\theta$ by $\theta(\mu)$ 
in \eqref{phi1} and 
%Finally, after  maximizing $\theta^*=\theta(\mu)$ 
%we may equally strip off the supremum in \eqref{phi1} and 
write the dual cumulant as
\begin{align}\label{dualCum}
  \phi(\mu) &=  \mu\theta(\mu) - \psi(\theta(\mu)) 
 %  &= \mu (\psi'(\theta))^{-1}(\mu) - \psi((\psi'(\theta))^{-1}(\mu))
\end{align}

\end{proof}

\end{comment}

This transformation is known as Legendre transform. For an extensive mathematical treatment,  Legendre transform and convex analysis are in \cite{Rockafellar1970}. Dual space and duality of exponential families are in \cite{barndorff1978}  and in \cite{amari2000}.

 \begin{exam}
  Dual of the cumulant function $\psi(\theta)=\exp \theta$ for the Poisson can be found as below after applying \eqref{eqTheta1}
\begin{align}
   \theta(\mu) % &= 
   %(\psi'(\theta))^{-1}(\mu)  \\
    &= (\exp'(\theta))^{-1}(\mu) = \log \mu  
\end{align}
Then by \eqref{dualCum}, the dual cumulant function becomes
\begin{align}
   \phi(\mu) &=  \mu (\log \mu) - \exp(\log \mu) =  \mu \log \mu -  \mu
\end{align}
\begin{comment}

 which is equal to the derivative of the dual function
\begin{align}
 \phi'(\mu) = \log \mu  
\end{align}
Integrating back, we obtain dual cumulant function
\begin{align}
 \phi(\mu) = \int \log \mu \;d\mu = \mu\log \mu - \mu  + const
\end{align}
\end{comment}
  
\end{exam}

% \url{http://en.wikipedia.org/wiki/Convex_conjugate} 
% \url{https://www.ifor.math.ethz.ch/teaching/lectures/convexopt_ss10/Lecture_5}s

\subsection{Variance Functions}

The relationship between $\theta$  and  $\mu$ is more direct and  given as \cite{jorgensen87}
\begin{align}\label{eq.theta}
 \frac{ \partiald  \theta(\mu) }{ \partiald \mu } =
  \frac{ \partiald^2 \phi(\mu) }{ \partiald \mu^2 } =  
  v(\mu)^{-1}
\end{align}
 Here $v(\mu)$ is the {\it variance function} \cite{tweedie84,barlev86,jorgensen87}, and is related to the variance of the distribution by  dispersion parameter $\varphi$  as
 %$\varphi^{-1}$ as 
 \begin{align}
  Var(x) = \varphi v(\mu) 
\end{align}

As a special case of dispersion models, {\it Tweedie distributions} also called  Tweedie models assume that the variance function is in the form of power function  \cite{barlev86, jorgensen87} given as
\begin{align}
   v(\mu) = \mu^{p}
\end{align}
that fully characterizes one-parameter dispersion model. Here, the exponent is $p=0,1,2,3$ for well known distributions as Gaussian, Poisson, gamma and inverse Gaussian.
For $1<p<2$, they can be represented as the Poisson sum of gamma distributions so-called {\it compound Poisson distribution}. Indeed, they exist for all real values of $p$ except for the range $0 < p < 1$ \cite{jorgensen87}.

Variance functions play important roles in characterizing one-parameter distributions similar to the role  moment generating functions play. Given variance function,  density for dispersion model is completely identified by first identifying cumulant generating function and characteristic equation  and next inverting characteristic equation via Fourier inversion formula \cite{jorgensen1997}.  

\begin{comment}

\Note{
Yukarıda inverting KISMI???

For Tweedie models, for example, starting from the variance function cumulant function can be computed as, first, we find $\theta(\mu)$ by solving the differential equation
\begin{align}
 \frac{ \partiald  \theta(\mu) }{ \partiald \mu } = \frac{1}{v(\mu)}=  
  \mu^{-p}
\end{align}
and then invert it and find $\mu(\theta)$
\begin{align}
   \{\theta(\mu)\}^{-1} = \mu(\theta) =  \frac{\partiald \psi(\theta)}{\partiald\theta }   
\end{align}
and third we find the cumulant by integrating $\mu(\theta)$. Here arbitrary constants due to integration can be ignored safely \cite{barlev86,Yilmaz2012}.
}  

\Note{a) Bu yetmez. Belki full örnek. b( Bunu example altına mı almak daha iyi.)}
\Note{ p=1 ve p=2 için limitlerin hadamard ile alındığı söylemeliyiz.
Ancak bu limitler nerede gerekiyor?}
\Note{ Metedology: $\phi$, alpha beta, cumulant is generalized and tied to VF in integral form}

\end{comment}
  
\begin{table}[!t]
\caption{Distributions indexed by variance functions. 
%For Tweedie, $ p \not\in (0,1)$
} 
\label{tablelabel}  
\centering
\begin{tabular}{l|l||l|l}\hline
 
{\bf $v(\mu)$} & {\bf Tweedie} &   {\bf $v(\mu)$} &  {\bf Non-Tweedie} \\ \hline
$\mu^0$ &  Gauissan &  $\mu - \mu^2$ & Bernoulli \\ \hline 
$\mu^1$ & Poisson &  $\mu + \mu^2$ &   Neg. Binomial \\ \hline
$\mu^2$ & Gamma &  $1 + \mu^2$ &   Hyperbolic secant \\ \hline
$\mu^3$ & Inv. Gaussian &    & \\ \hline
$\mu^p$ & General Tweedie & &  \\ \hline

\end{tabular}
\end{table}

\subsection{Bregman Divergences}

%This section reviews briefly two general families of divergences that are {\it Bregman divergences} and {\it Csiszár $f$-divergences}.  

%\subsection{Bregman divergences} 

By definition, for any real valued differentiable convex function $\phi$ the Bregman divergence  \cite{bregman1967}
\begin{align}\label{eqBreg1}
   d_\phi(x,\mu) = \phi(x) - \phi(\mu) -(x-\mu) \phi'(\mu)
\end{align}  
It is equal to tail of first-order Taylor expansion of $\phi(x)$ at $\mu$. In addition, it enjoys convex duality and can be equally expressed as 
\begin{align}
  d_\phi(x,\mu) 
 % &= \phi(x) + \psi(\theta) - x \theta \\
  &= \phi(x) + \psi(\theta(\mu)) - x\theta(\mu)  
\end{align}
that can be showed by plugging $\phi(\mu)$ using the dual relation
 \begin{align} 
    \phi(\mu) &= 
    %\sup_{\theta \in \Theta} \{ \mu\theta - \psi(\theta)\}    
      \mu\theta(\mu) - \psi(\theta(\mu)) 
 \end{align}
 in \eqref{eqBreg1} and identifying $\phi'(\mu) = \theta(\mu)$ as given in \eqref{eqPhi}.

%d:\P3\Matlab\IEEE   
\begin{figure}[!t] 
\centering 
\subfigure{\includegraphics[width=1\columnwidth, height=0.5\columnwidth]{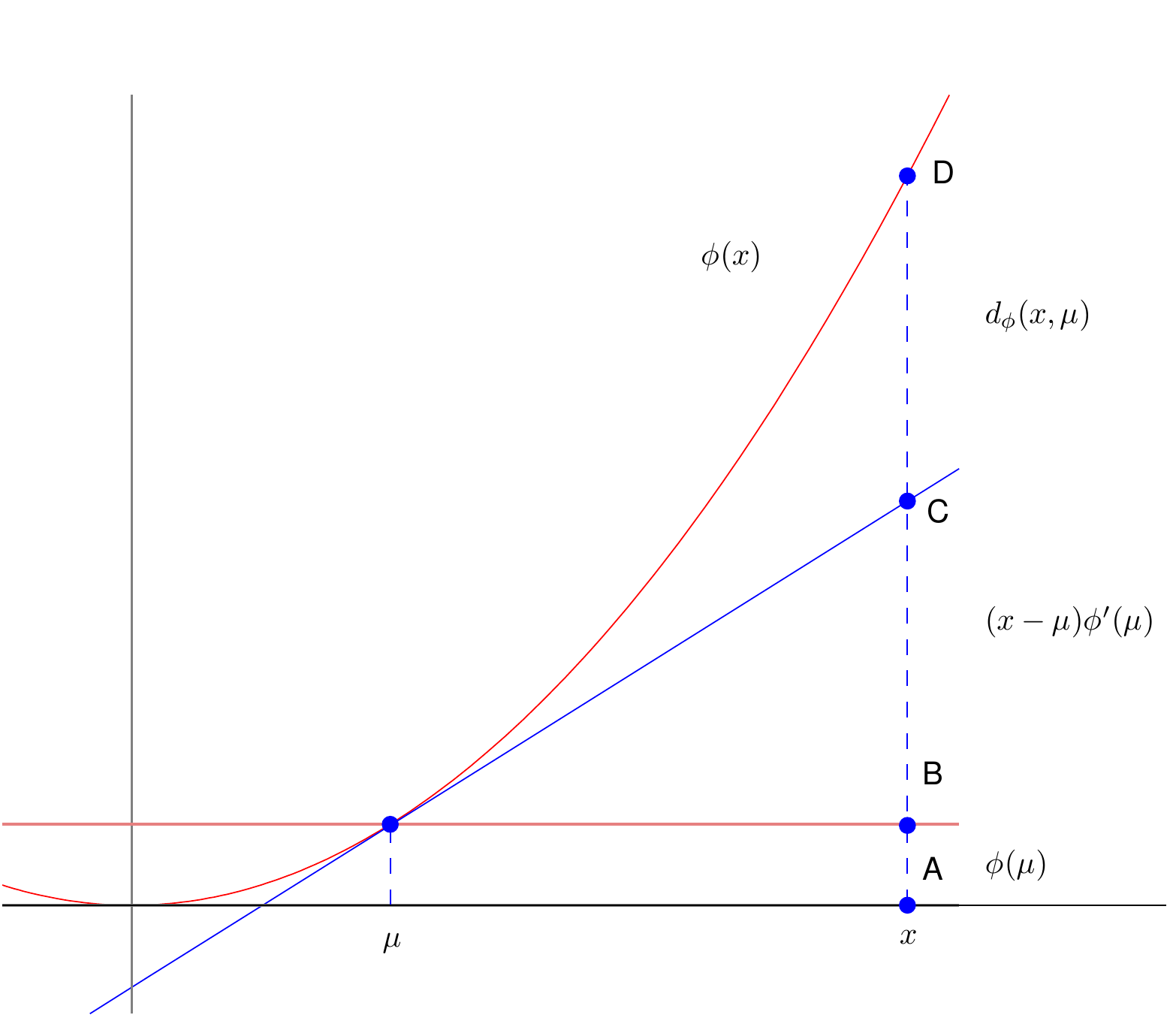}}
\caption{Beta divergence illustration $d_\beta(x,\mu)$ as the length of line segment $|CD|$. }
\label{fig:bregman}
\end{figure} 

The Bregman divergences are non-negative quantities as $d_\phi(x,\mu) \geq 0$ and equality holds only for $x=\mu$. However,  they provide neither symmetry nor triangular inequality in general and hence are not considered to be metrics. For a special choice of function $\phi$, Bregman divergence turns to  Euclidean distance that exhibits metric properties.

%\subsection{Csiszár's $f$-divergence }
\begin{comment}

The {\it $f$-divergences} are generalized KL divergences, and are  introduced independently by authors Csiszár \cite{csiszar1963}, Morimoto \cite{morimoto1963} and Ali \& Silvey \cite{ali1966} during 1960s. By definition, for any real valued convex function $f$  the $f$-divergence
%of $\mu$  from $x$
is defined as \cite{csiszar1963}
\begin{align}
  % d_f(x,\mu) = \sum \mu f(\frac{x}{\mu})
   d_f(x,\mu) =  \mu f(\frac{x}{\mu}) \qquad \text{with} \qquad f(1)  = 0
\end{align}
For the setting $x=1$, the divergence $d_f(1,\mu)$ becomes 
\begin{align}
  f^*(\mu) = \mu f(1/\mu)
\end{align}
where $f^*$ is called as Csiszár dual of the function $f$.

The Bregman and $f$-divergences are non-negative quantities as $d_\phi(x,\mu) \geq 0$ and $d_f(x,\mu) \geq 0$ whereas for $x=\mu$ they both become zero. However,  they provide neither symmetry nor triangular inequality in general and hence are not considered to be metrics. For special cases, Euclidean distance in Bregman family and Hellinger distance $f$-divergence family are metric and hence are qualified as distances.  

\end{comment}

\section{Basic Elements}

\subsection{The Canonical Parameter $\theta(\mu)$ }

We first derive the canonical parameter $\theta$ as the definite integral on interval $[\mu_0,\mu]$ by solving the differential equation
\begin{align}
  \frac{\partiald \theta(\mu)}{\partiald \mu} = \frac{1}{v(\mu)} \Rightarrow 
   \theta(\mu)= \int_{\mu_0}^{\mu} \frac{1}{v(t)} \partiald t 
\end{align} 
Here the choice of the lower bound $\mu_0$ is arbitrary but fixed. Indeed, any fixed value $\mu_0 \in \Omega$ can be specified as lower bound. As it will be clear, this choice has no effect on the computation of the beta divergence.

\subsection{Dual Cumulant Function $\phi(\mu)$}

%\Note{ilk defa bahsettiğimizde 
%To differentiate classical definition we qualify definition as 'generalized' whereas 
%in subsequent sections we simply drop it.}

Dual cumulant function plays central role when defining and generating beta divergences as  special cases of Bregman divergences. 

\begin{lem}
Let $v$ be  variance function of a dispersion model. Generalized dual cumulant function is equal to
\begin{align}
  \phi(\mu) &=  \int_{\mu_0}^\mu  \frac{\mu-t}{v(t)}    \partiald t
\end{align}
\end{lem}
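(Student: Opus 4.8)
The plan is to build $\phi$ from the two derivative relations already established, namely $\phi'(\mu)=\theta(\mu)$ from \eqref{eqPhi} and $\theta'(\mu)=v(\mu)^{-1}$ from \eqref{eq.theta}, together with the integral representation $\theta(\mu)=\int_{\mu_0}^{\mu} v(t)^{-1}\,dt$ derived just above. Since the candidate right-hand side and $\phi$ should coincide, I would verify the claim by showing that the two sides share the same derivative and the same value at the base point $\mu_0$, which suffices by the fundamental theorem of calculus.

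First I would write the candidate $F(\mu) = \int_{\mu_0}^\mu \frac{\mu - t}{v(t)}\,dt$ and separate the $\mu$ that sits in the integrand from the $\mu$ in the upper limit, giving $F(\mu) = \mu \int_{\mu_0}^\mu \frac{dt}{v(t)} - \int_{\mu_0}^\mu \frac{t}{v(t)}\,dt = \mu\,\theta(\mu) - \int_{\mu_0}^\mu \frac{t}{v(t)}\,dt$. Differentiating with the Leibniz rule, the boundary contributions $\mu/v(\mu)$ cancel and leave $F'(\mu)=\theta(\mu)$, which is exactly $\phi'(\mu)$. Hence $F$ and $\phi$ differ at most by an additive constant.

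An alternative route, which I find cleaner, is to integrate $\phi'=\theta$ directly and then iterate the integral form of $\theta$, obtaining the double integral $\phi(\mu)=\int_{\mu_0}^\mu \theta(s)\,ds = \int_{\mu_0}^\mu\!\!\int_{\mu_0}^{s} v(t)^{-1}\,dt\,ds$ over the triangle $\mu_0\le t\le s\le \mu$. Swapping the order of integration (Fubini) collapses the inner integral $\int_t^\mu ds$ to $\mu-t$ and yields $\phi(\mu)=\int_{\mu_0}^\mu \frac{\mu-t}{v(t)}\,dt$ in a single stroke, avoiding the explicit constant-matching step.

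The main obstacle is pinning down the constant of integration, i.e. justifying $\phi(\mu_0)=0$. Both routes determine $\phi$ only up to an additive constant, whereas $F(\mu_0)=0$ holds automatically from the empty integral, so I would argue that this normalization is consistent with the Legendre definition \eqref{dualCum} once the arbitrary constants absorbed into $\theta$ and $\psi$ through the choice of lower bound are fixed. This is precisely the freedom noted after the derivation of $\theta(\mu)$, where $\mu_0$ is described as arbitrary but fixed and immaterial to the divergence. I would emphasize that any such constant cancels in the Bregman difference defining $d_\beta$, so adopting the normalization $\phi(\mu_0)=0$ is harmless and makes the stated integral identity exact.
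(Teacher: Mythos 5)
Your proposal is correct, and your ``alternative route'' --- integrating $\phi'(\mu)=\theta(\mu)$ to get the double integral $\int_{\mu_0}^\mu\int_{\mu_0}^{s} v(t)^{-1}\,dt\,ds$ over the triangle $\mu_0\le t\le s\le\mu$ and then swapping the order of integration so the inner integral collapses to $\mu-t$ --- is exactly the paper's own proof. Your additional Leibniz-rule verification and the explicit discussion of the integration constant (normalizing $\phi(\mu_0)=0$, harmless since any constant cancels in the Bregman difference) address a point the paper leaves implicit by simply defining $\phi$ through the definite integral with arbitrary but fixed lower bound $\mu_0$.
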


\begin{proof}

By solving the differential equation, we obtain the  dual cumulant function $\phi(\mu)$ as  
\begin{align}
  \frac{\partiald \phi(\mu)}{\partiald \mu} = \theta(\mu)   \Rightarrow 
  \phi(\mu) = \int_{\mu_0}^\mu \theta(t)\partiald t 
\end{align} 
that is redefined after plugging $\theta(\mu)$ in 
\begin{align}
\phi(\mu) = \int_{\mu_0}^\mu  \left(\int_{\mu_0}^{t} \frac{1}{v(z)} \partiald z\right) \partiald t  
\end{align} 
Here bounds for the variables can be replaced as  
\begin{align}
  \mu_0 < z <t \qquad  &\Rightarrow \qquad   \mu_0 < z <\mu  \\ 
 \mu_0 < t <\mu \qquad &\Rightarrow \qquad  z < t <\mu  
\end{align}
where  we change the order of the integration and end up with the integral form for the dual cumulant  function $\phi$.

\begin{comment}

\Note{

By solving the differential equation, we obtain the  dual cumulant function $\phi(\mu)$ as  
\begin{align}
  \frac{\partiald \phi(\mu)}{\partiald \mu} = \theta(\mu)   \Rightarrow 
  \phi(\mu) = \int_{\mu_0}^\mu \theta(z)\partiald z 
\end{align} 
that is redefined after plugging $\theta(\mu)$ in 
\begin{align}
\phi(\mu) = \int_{\mu_0}^\mu  \left(\int_{\mu_0}^{z} \frac{1}{v(t)} \partiald t\right) \partiald z  
\end{align} 
Here bounds for the variables can be replaced as  
\begin{align}
  \mu_0 < t <z \qquad  &\Rightarrow \qquad   \mu_0 < t <\mu  \\ 
 \mu_0 < z <\mu \qquad &\Rightarrow \qquad  t < z <\mu  
\end{align}
where  we change the order of the integration and end up with the integral form for the entropy function.
}
\end{comment}

\end{proof}

As will be needed in further sections, the first and second derivatives of the dual cumulant function wrt $\mu$ are computed as follows
\begin{align}
  \phi'(\mu) &= \int_{\mu_0}^\mu  \frac{1}{v(t)} \;dt  \qquad
  \phi''(\mu) =  \frac{1}{v(\mu)} 
\end{align}
noting that the derivative of the dual cumulant function is inverse mean value mapping as $\phi'(\mu)= \theta(\mu)$.

The dual cumulant function $\phi$ corresponds to area under the curve $\theta(t)$ parametrized by $t \in \mathbb{R}$, in the interval $[t=\mu_0, \; t=\mu]$ given by the equation
\begin{align}
  \phi(\mu) = \int_{\mu_0}^\mu \theta(t)\partiald t 
\end{align} 
as illustrated by Figure \ref{fig:betaArea2}. The curve equation $\theta(t)$ in the figure is obtained as given in the following example.

\begin{exam}
For general case the curve equation is the cumulant function 
\begin{align}
  \theta(t) = \int_{t_0}^t \frac{1}{v(z)} \; dz
\end{align}
By setting variance function as $v(z)=z^p$ and the base as $t_0=1$, it is specialized for Tweedie models  as
\begin{align}
  \theta_p(t) &= \int_1^t  z^{-p} \;dz = \frac{t^{1-p} -1}{1-p}  
\end{align}
such that all the  curves generated by various values of $p$  meet at one common ground point $t=t_0=1$ where  
\begin{align}
  \theta_p(1) = 0
\end{align}
For $p=0,1,2$ the curve equations turn to
  \begin{align}  
  \theta(t)  = \left\{ \begin{array}{l}      
    \theta_0(t)=t-1  \\
    \theta_1(t)=\log t \\
    \theta_2(t)=- \frac{1}{t} + 1  \\ 
 \end{array}\right.
\end{align}

\end{exam} 

\begin{exam}\label{exDualCumulant1}
For Tweedie models  dual cumulant function $\phi(\mu)$ becomes as
\begin{align}\label{tweedie18}
   \phi(\mu) &= \int_{\mu_0}^{\mu} \frac{\mu-t}{t^p} \partiald t 
%  = \int_{\mu_0}^{\mu} \mu t^{-p}- t^{1-p} \partiald t \\
%  &= \frac{\mu t^{1-p}}{1-p} - \frac{t^{2-p}}{2-p}  \Big|_{\mu_0}^{\mu} \\
  %
   %&= \frac{ \mu^{2-p}}{(1-p)(2-p)}   - \frac{ \mu\mu_0^{1-p}}{1-p} + \frac{\mu_0^{2-p}}{2-p}   \\
   = \phi_1(\mu)   + \phi_0(\mu)
\end{align}
where the function $\phi_1(\mu)$ contains non-linear terms wrt $\mu$ as
\begin{align}
   \phi_1(\mu) &=  \frac{ \mu^{2-p}}{(1-p)(2-p)}  
\end{align}
whereas the function $\phi_0(\mu)$ contains linear terms wrt $\mu$
\begin{align}\label{linearMu}
  \phi_0(\mu) = -\mu \frac{ \mu_0^{1-p}}{1-p} + \frac{\mu_0^{2-p}}{2-p}
\end{align}

\begin{figure}[!t]   
\centering
\subfigure{\includegraphics[width=1\columnwidth]{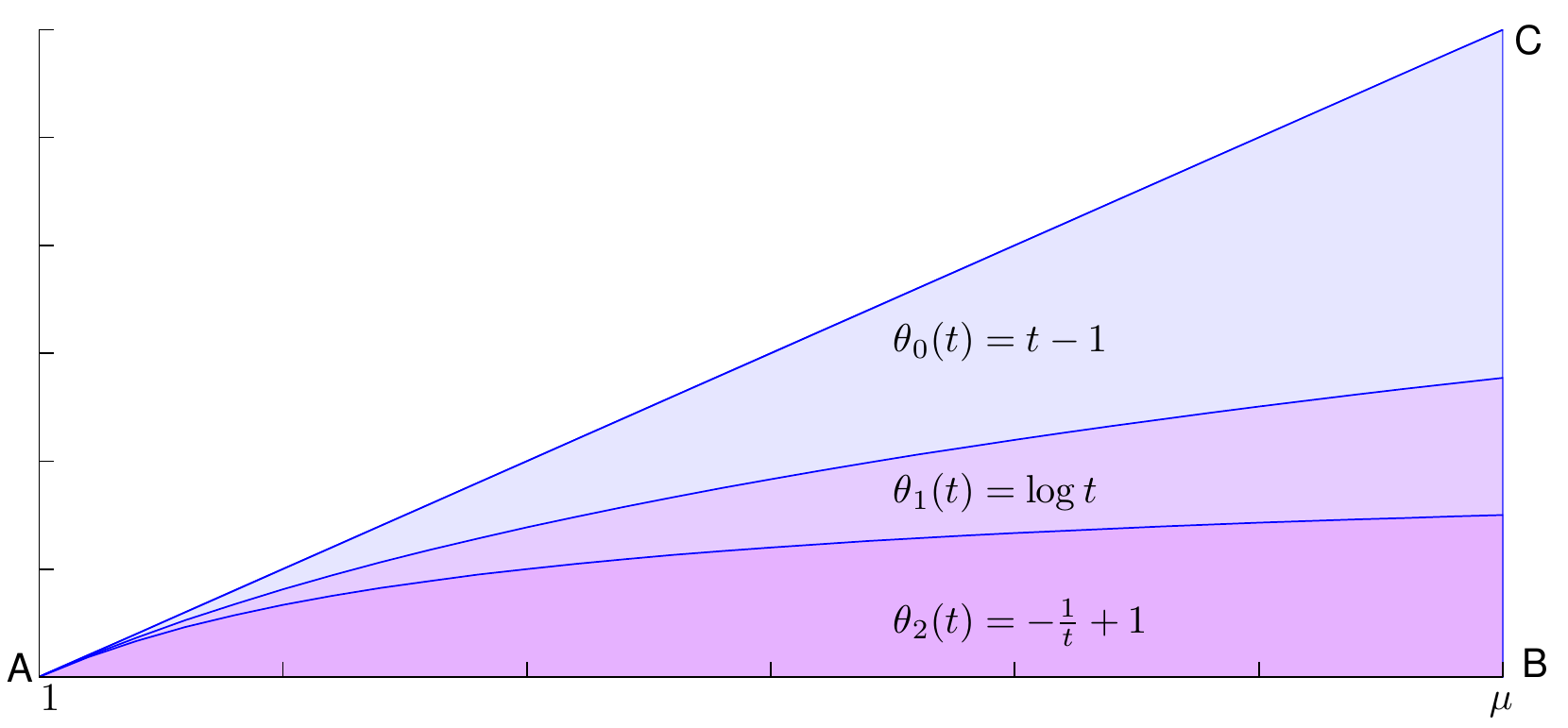}}
\caption{Figure illustrates dual cumulant function $\phi$ as the area under the curve $\theta_p(t)$. For example, for $p=0$ the function is associated with the Gaussian distribution and Euclidean distance, that corresponds to  the triangle ABC.  The area of the triangle is $1/2(\mu-1)^2$ since $|AB|=|BC|=\mu-1$ where this is equal to $\phi(\mu)=d_\beta(\mu,1)$. Here, $x$ axis is $t$ and $y$ axis is $\theta_p(t)$.
}
\label{fig:betaArea2}
\end{figure} 

The point of separating  linear and non-linear terms is that linear terms are canceled smoothly and disappear when generating beta divergence by the Bregman divergence as also reported by \cite{cichocki2010,reid2011}. Indeed, very often we only use non-linear part of dual cumulant function  $\phi_1(\mu)$ that has special values for $p=0,1,2$
 \begin{align*}
     \phi_1(\mu) &= \left\{ 
     \begin{array}{l l}
   %  \frac{x^{2-p}}{(1-p)(2-p)}    & \qquad  p \neq 1,2  \\
     \frac{1}{2}\mu^2   & \qquad  p=0  \\
    \mu\log \mu   & \qquad  p=1  \\ 
    -\log \mu  & \qquad  p=2  \\
    \end{array}
    \right. 
\end{align*}

Finally, as the general formulation, after setting $\mu_0=1$, dual cumulant function $\phi(\mu)$ for Tweedie models   becomes as
\begin{align*}
   \phi(\mu) &
   %\int_1^{\mu} \frac{\mu-t}{t^p} \partiald t 
  %= \int_1^{\mu} \mu t^{-p}- t^{1-p} \partiald t \\
  %&= \frac{\mu t^{1-p}}{1-p} - \frac{t^{2-p}}{2-p}  \Big|_1^{\mu} \\
  %&= \frac{ \mu^{2-p}}{1-p} - \frac{\mu^{2-p}}{2-p}  
  %- \frac{\mu }{1-p} + \frac{1}{2-p} \\  
  %= \frac{ \mu^{2-p}}{(1-p)(2-p)}  - \frac{\mu }{1-p} + \frac{1}{2-p}  
   = \frac{ \mu^{2-p}}{(1-p)(2-p)}  - \frac{\mu }{1-p} + \frac{1}{2-p}
\end{align*}
where the limits can be found by l'Hopitals. This form is  reported by various authors such as \cite{liese2006, cichocki11} with the index parameter  $q$ usually adjusted as $q=2-p$. 
\end{exam}

%\Note{ASLINA BUNU BELKI BAŞTA ($\phi$ nin ilk geçtiği yerde) SÖYLEMENK VE HER gerektiğinde $f()$ diye refer etmek iyi olabilir.

\begin{comment}

In the example above, the const term is
  \begin{align*}
   f(\mu_0) 
    &= - \int_{\mathcal{X}} dx\; p(x) x  \frac{\mu_0^{1-p}}{1-p} +
    \int_{\mathcal{X}} dx\; p(x)   \frac{\mu_0^{2-p}}{2-p} \\
    %
    &= - \mu  \frac{\mu_0^{1-p}}{1-p} +  \frac{\mu_0^{2-p}}{2-p} 
\end{align*}
\end{comment}

\begin{comment}
\begin{rem}
In the example above, the const term is
  \begin{align*}
   \phi_0(\mu) 
    &= - \int_{\mathcal{X}} dx\; p(x) x  \frac{\mu_0^{1-p}}{1-p} +
    \int_{\mathcal{X}} dx\; p(x)   \frac{\mu_0^{2-p}}{2-p} \\
    %
    &= - \mu  \frac{\mu_0^{1-p}}{1-p} +  \frac{\mu_0^{2-p}}{2-p} 
\end{align*}
\end{rem}
\end{comment}

\begin{comment}

\Note{
4. $\phi''$ ile eigenvalue ilişkisi.

5. Bunu ayır. İkincisi için yorum yap.
\begin{align}
  \phi'(\mu) &= \int_{\mu_0}^\mu  \frac{1}{v(t)} dt  \qquad
  \phi''(\mu) =  \frac{1}{v(\mu)} 
\end{align}  

6. Herhangi bir yere: we say VF generates beta divergence. Since VF=1/phi'' it is equvalent to saye phi generates beta divergence\ldots
}

\end{comment}

\section{Generalized Beta Divergence}

This section is all about  beta divergence where we first obtain its integral form and then study its scaling and translation properties.

\subsection{Extending Beta Divergence}

In the literature beta divergence is formulated as (with usually adjusted index parameter as $\beta=2-p$ or as $\beta=1-p$ \cite{cichocki09}) 
\begin{align}\label{beta1}
  d_\beta(x,\mu) &= \frac{x^{2-p}}{(1-p)(2-p)}  - \frac{x{\mu^{1-p}}}{1-p}  + \frac{\mu^{2-p}}{2-p}
\end{align}
This definition is linked to power variance functions of  Tweedie models. In this section its definition is extended beyond Tweedie models. 
 
\begin{defn}\label{defBeta}
Let $x,\mu \in \Omega$. (Generalized) beta divergence $d_\beta(x,\mu)$ is specialized Bregman divergence generated by (generalized) dual cumulant function $\phi$, which, in turn is induced by variance function $v(\mu)$.
\end{defn}

This definition implies that beta divergence is generated or induced by some variance functions. In this way, via the variance functions, beta divergence is linked to the exponential dispersion models. To differentiate this definition from well-known power function related definition it is qualified as 'generalized' whereas for the rest of the paper, we drop the term 'generalized' and simply refer as beta divergence.  

\begin{lem}
Beta divergence defined in Definition \ref{defBeta} $d_\beta(x,\mu)$
is equal to 
\begin{align}
  d_\beta(x,\mu) &=  \int_\mu^x  \frac{x-t}{v(t)}    \partiald t
\end{align} 
\end{lem}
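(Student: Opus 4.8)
The plan is to unwind the definition directly: by Definition \ref{defBeta}, $d_\beta(x,\mu)$ is nothing but the Bregman divergence $d_\phi(x,\mu)$ from \eqref{eqBreg1} with $\phi$ the \emph{specific} dual cumulant function supplied by the previous lemma. So first I would write
\begin{align*}
  d_\beta(x,\mu) = \phi(x) - \phi(\mu) - (x-\mu)\,\phi'(\mu),
\end{align*}
and then substitute the three ingredients already computed earlier in the excerpt, namely $\phi(x)=\int_{\mu_0}^x \frac{x-t}{v(t)}\partiald t$, $\phi(\mu)=\int_{\mu_0}^\mu \frac{\mu-t}{v(t)}\partiald t$, and $\phi'(\mu)=\theta(\mu)=\int_{\mu_0}^\mu \frac{1}{v(t)}\partiald t$. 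Everything then becomes a manipulation of integrals sharing the same integrand weight $1/v(t)$.

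The key algebraic move is to split the first integral at $\mu$, writing $\int_{\mu_0}^x = \int_{\mu_0}^\mu + \int_{\mu}^x$ in $\phi(x)$. This peels off the target term $\int_{\mu}^x \frac{x-t}{v(t)}\partiald t$ and leaves a residue supported entirely on $[\mu_0,\mu]$:
\begin{align*}
  d_\beta(x,\mu) = \int_{\mu}^x \frac{x-t}{v(t)}\partiald t
   + \int_{\mu_0}^\mu \frac{(x-t)-(\mu-t)}{v(t)}\partiald t
   - (x-\mu)\int_{\mu_0}^\mu \frac{1}{v(t)}\partiald t.
\end{align*}
Since $(x-t)-(\mu-t)=x-\mu$ is constant in $t$, the middle integral equals $(x-\mu)\int_{\mu_0}^\mu \frac{1}{v(t)}\partiald t$, which cancels the last term exactly, leaving the claimed identity. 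This cancellation is also the clean way to confirm the earlier remark that the arbitrary lower bound $\mu_0$ drops out of the beta divergence.

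I do not expect a genuine obstacle here: the statement is essentially a specialization, and the whole proof reduces to the splitting-and-cancellation bookkeeping above. The only point requiring mild care is the identification $\phi'(\mu)=\theta(\mu)$ together with the order of the limits in $\int_{\mu}^x$ (so that the oriented integral carries the correct sign whether $x>\mu$ or $x<\mu$); once the three integral forms from the preceding lemma are granted, the linear-in-$t$ residue over $[\mu_0,\mu]$ annihilates by construction, which is precisely why the linear part $\phi_0$ of the dual cumulant function never contributes to the divergence.
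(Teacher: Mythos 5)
Your proposal is correct and takes essentially the same route as the paper: the paper's proof consists of the single remark that substituting the integral forms of $\phi(x)$, $\phi(\mu)$ and $\phi'(\mu)=\theta(\mu)$ into the Bregman divergence $d_\phi(x,\mu)=\phi(x)-\phi(\mu)-(x-\mu)\phi'(\mu)$ yields the claimed identity, and your splitting of $\int_{\mu_0}^x$ at $\mu$ followed by the cancellation of the $(x-\mu)\int_{\mu_0}^\mu \frac{1}{v(t)}\,\partiald t$ terms is exactly that substitution carried out explicitly. You merely make visible the bookkeeping (including why the arbitrary base point $\mu_0$ drops out) that the paper leaves implicit.
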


\begin{proof}
By simply substituting the dual cumulant function and its first derivative
\begin{align}
  \phi(\mu) &=  \int_{\mu_0}^\mu  \frac{\mu-t}{v(t)}    \partiald t \qquad
  \phi'(\mu) =   \int_{\mu_0}^\mu   \frac{1}{v(t)}    \partiald t
\end{align}
in the Bregman divergence, we obtain the beta divergence.
\end{proof}

\begin{comment}
We note that generalized \Beta divergence is specialized to the generalized dual cumulant (entropy) function as a special case when $\mu=\mu_0$
\begin{align}
 \phi(x) = d_\beta(x|\mu_0) = \int_{\mu_0}^x \frac{x-t}{v(t)} \partiald t 
\end{align}
where we prefer the notation $d_\beta(x|\mu_0)$ since $\mu_0$ is no longer an independent variable but instead just a parameter and hence the notation   $d_\beta(x,\mu_0)$ becomes awkward. 
\end{comment} 
 
 We note that \Beta divergence is specialized to the  dual cumulant function as a special case when we measure divergence of $\mu$ from the base measure  $\mu_0$
\begin{align}
 \phi(\mu) = d_\beta(\mu,\mu_0) = \int_{\mu_0}^\mu \frac{\mu-t}{v(t)} \partiald t 
\end{align}
Here, we abuse  the notation and still use $d_\beta(\mu, \mu_0)$. This is indeed awkward since $\mu_0$ is no longer an independent variable but instead just a parameter and hence the notation   $d_\beta(\mu|\mu_0)$ would be better choice.

\begin{rem}
Remark that the integral form of the \Beta divergence can be obtained from Taylor expansion. Bregman divergence is equal to the first order Taylor expansion \cite{liese2006}
\begin{align}
  \phi(x) = \phi(\mu) + \phi'(\mu)(x-\mu) + R_\phi(x,\mu)
\end{align} 
where $R_\phi$ is the  remainder term expressed as
\begin{align}
  R_\phi(x,\mu) = \int_\mu^x (x-t)\phi''(t) \partiald t
\end{align}
The remainder is interpreted as the divergence from $x$ to $\mu$. As a special case for the dual cumulant function $\phi$ the Bregman divergence is specialized as \Beta divergence as
\begin{align}
   R_\phi(x,\mu) = d_\beta(x,\mu) = \int_\mu^x \frac{x-t}{v(t)} \partiald t
\end{align}
where $\phi''(t)$ is the inverse variance function $v(\mu)^{-1}$. 
\end{rem}

\begin{exam}

By following  Definition \ref{defBeta} we  easily find beta divergences for distributions including Tweedie and non-Tweedie families by decomposing the integral into two parts and computing each separately as
\begin{align*}
  d_\beta(x,\mu) &=  \int_\mu^x  \frac{x-t}{v(t)}    \partiald t 
  =  x \int_\mu^x  \frac{1}{v(t)}    \partiald t -
   \int_\mu^x  \frac{t}{v(t)}    \partiald t
\end{align*}
For example, for Tweedie models with VF $v(\mu)=\mu^p$ this results to the classical form  of beta divergence as in \eqref{beta1}. For others, for Bernoulli (Binomial with $m=1$) with VF given as 
  $v(\mu) = \mu - \mu^2$ \cite{mcculloch89} generates the \Beta divergence   
\begin{align}
  d_\beta(x,\mu) &= x\log \frac{x}{\mu}  + (1-x)\log\frac{1-x}{1-\mu} 
\end{align}
whereas we compute \Beta divergences for  negative binomial distribution with VF $v(\mu) = \mu + \mu^2$ 
\begin{align}
  d_\beta(x,\mu)&=  x\log \frac{x(1+\mu)}{\mu(1+x)}  + \log\frac{1+\mu}{1+x} 
\end{align}
and for hyperbolic secant distribution with VF $v(\mu) = 1 + \mu^2$  
\begin{align*}
  d_\beta(x,\mu) = x(\arctan  x - \arctan  \mu) + \frac{1}{2}\log\frac{1+\mu^2}{1+x^2}
\end{align*}
by simply  following integrals (ignoring the constant)
\begin{align*}
  \int \frac{dt}{1+t^2} = \arctan t  \qquad  \int \frac{t}{1+t^2}dt = \frac{1}{2}\log(1+t^2) 
\end{align*}
\end{exam}

\subsection{Geometric Interpretation of Beta Divergence}

As a special case of Bregman divergence, beta divergence can be interpreted as the length of the line segment in Figure \ref{fig:bregman} as already known. As an alternative interpretation, beta divergence $d_\beta(x,\mu)$ for $x \geq \mu$ is  area of the region surrounded by at the top the curve $\theta(t)$, at the bottom by the horizontal line $y=\theta(\mu)$ and at right by the vertical line $x=\theta(x)$. To show this, we use the identities 
\begin{align}
  \phi(\mu) = \int_{\mu_0}^{\mu} \theta(t) \; dt \qquad \text{and} \qquad \phi'(\mu) =  \theta(\mu) 
\end{align}
in the Bregman divergence as 
\begin{align*}
  d_\beta(x,\mu) &= \phi(x) - \phi(\mu) - (x-\mu)\phi'(\mu) \\
  &= \int_{\mu_0}^{x} \theta(t) \; dt - \int_{\mu_0}^{\mu} \theta(t) \; dt - (x-\mu)\theta(\mu) \\
  &= \left(\int_{\mu}^{x} \theta(t) \; dt \right)  - (x-\mu)\theta(\mu) 
%  &= \text{Area of BCFD} - \text{Area of BCED} = \text{Area of DEF}
\end{align*}
where this subtraction corresponds to the area in Figure \ref{fig:betaArea1} as
\begin{align*} 
  d_\beta(x,\mu) 
  &= \text{Area of BCFD} - \text{Area of BCED} = \text{Area of DEF}
\end{align*}

In additon, in Figure \ref{fig:betaArea1} the dual cumulant functions are identified as
\begin{align} 
  \phi(\mu) &= \text{Area of ABD} \\
  \phi(x) &= \text{Area of ACF} 
\end{align}

\begin{figure}[!t]
\centering
\subfigure{\includegraphics[width=1\columnwidth]{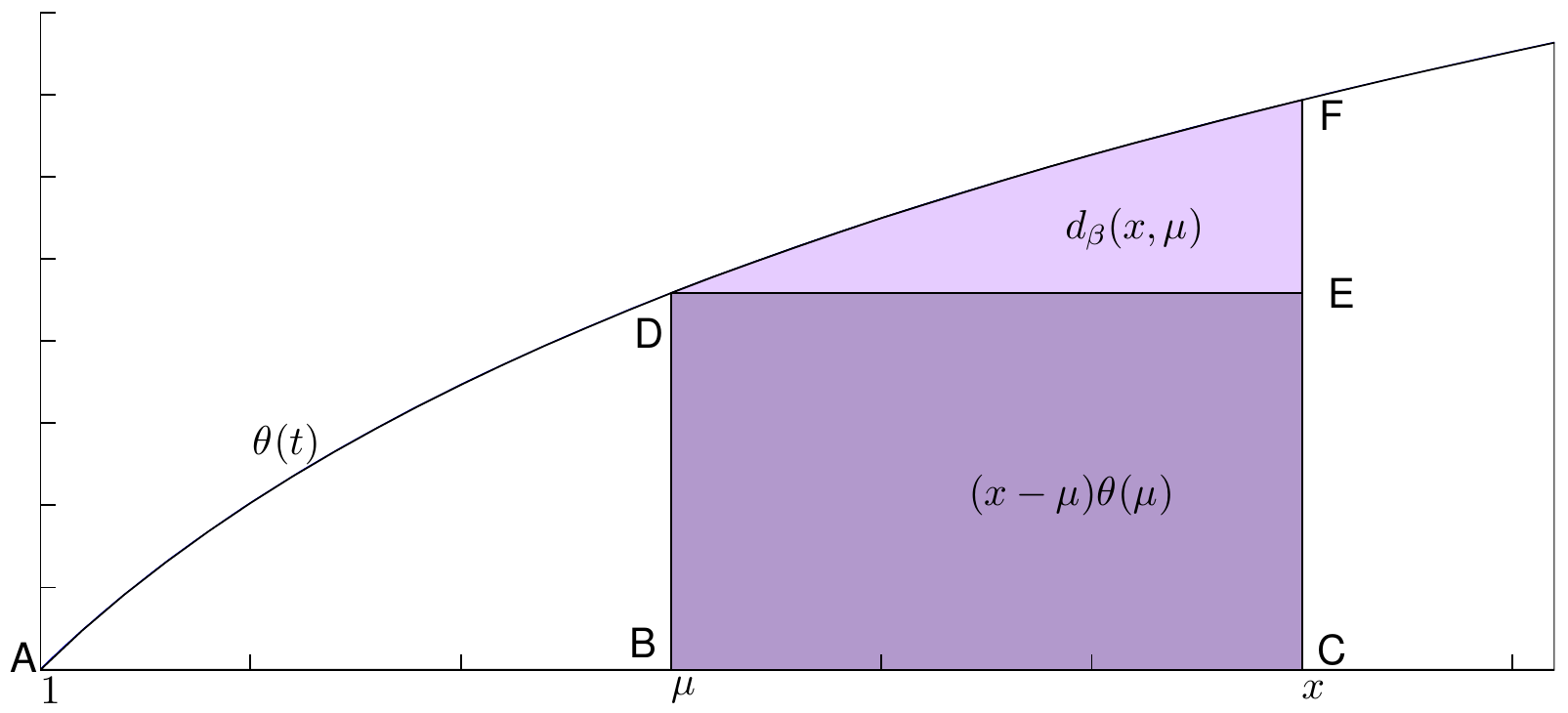}}
\caption{Figure illustrates beta divergence $d_\beta(x,\mu)$ as area of region DEF. Area of ABD is $\phi(\mu)$ and area of ACF is function $\phi(x)$ whereas area of the rectangle BCED is $(x-\mu)\theta(\mu)$.  Here, $x$ axis is $t$ and $y$ axis is $\theta_p(t)$.}
\label{fig:betaArea1}
\end{figure}

\subsection{Scaling of Beta Divergence}

In this section we find the scaling of beta divergence by $1/c$, i.e. we relate  
\begin{align}
% d_\beta(x,\mu) &\propto  d_\beta(x/c,\mu/c) \\
   \int_\mu^x \frac{x-t}{v(t)} dt \quad &\propto  \quad  \int_{\mu/c}^{x/c} \frac{x/c-t}{v(t)} dt 
\end{align} 
One immediate use of the result of scaling analysis is that by substituting $c=\mu$, we connect beta divergence to alpha divergence as shown later in this paper.

\begin{lem}
Let $x,\mu \in \Omega$, and $v(\mu)$ be the variance function. Then  \Beta divergence $d_\beta(x,\mu)$ and its scaled form $d_\beta(x/c,\mu/c)$ with the scalar $c\in \mathbb{R}_+$ are related as
\begin{align}
 d_\beta(x,\mu) =  \frac{c^2}{f(c)} d_\beta(x/c,\mu/c)
\end{align}  
where the variance function $v(\cdot)$ is written as in the following form
 \begin{align}
    f(c) =  \frac{v(ct)}{v( t)  }. 
\end{align}
\end{lem}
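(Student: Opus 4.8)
The plan is to prove the scaling identity by a single change of variables in the integral representation of the beta divergence established in the preceding lemma, namely $d_\beta(x,\mu) = \int_\mu^x \frac{x-t}{v(t)}\partiald t$. The strategy is to start from the unscaled divergence and substitute $t = cs$, so that the limits $t=\mu$ and $t=x$ become $s=\mu/c$ and $s=x/c$, which are exactly the arguments appearing in the scaled divergence $d_\beta(x/c,\mu/c)$.

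First I would set $t=cs$, giving $\partiald t = c\,\partiald s$, and observe that the numerator factors as $x - cs = c\,(x/c - s)$. Collecting the two resulting factors of $c$ together with the change of limits yields
\begin{align*}
  d_\beta(x,\mu) = c^2 \int_{\mu/c}^{x/c} \frac{x/c - s}{v(cs)}\partiald s.
\end{align*}
At this point the integrand differs from that of $d_\beta(x/c,\mu/c)$ only in that the denominator is $v(cs)$ rather than $v(s)$. Next I would invoke the stated relation $v(cs) = f(c)\,v(s)$, pull the factor $f(c)$ (constant in $s$) outside the integral, and recognize what remains as exactly $d_\beta(x/c,\mu/c)$, which gives $d_\beta(x,\mu) = \frac{c^2}{f(c)}\, d_\beta(x/c,\mu/c)$ as claimed.

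The main obstacle here is conceptual rather than computational: the well-definedness of $f(c)$. The hypothesis writes $f(c) = v(ct)/v(t)$, yet for this to be a genuine function of $c$ alone the ratio must be independent of $t$; equivalently, $v$ must be multiplicatively homogeneous, $v(ct)=f(c)\,v(t)$ for all $t$. This is precisely the situation for the power variance functions $v(t)=t^p$ of the Tweedie models, where $f(c)=c^p$ and the identity specializes to the familiar scaling $d_\beta(x,\mu) = c^{2-p}\, d_\beta(x/c,\mu/c)$. I would therefore flag this homogeneity as the operative assumption before carrying out the substitution, since without it the factor $f(c)$ cannot legitimately be extracted from under the integral sign, and the clean two-line change of variables is the entire content of the argument once that assumption is in place.
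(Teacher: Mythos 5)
Your proof is correct and follows essentially the same route as the paper: a change of variables $t=cs$ in the integral representation, followed by extracting $f(c)$ via the decomposition $v(cs)=f(c)\,v(s)$ (the paper phrases the substitution through a generic $\int f(g(t))g'(t)\,\partiald t$ template, but the computation is identical). Your explicit flag that $f(c)=v(ct)/v(t)$ is well defined only when $v$ is multiplicatively homogeneous is a sound clarification of what the paper calls a ``special decomposition'' and leaves implicit.
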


\begin{proof}

Here we use change of variable of the integrals. Consider the integral
\begin{align}\label{eq.scaleBeta1}
  \int_\mu^x f(  g(t) ) g'(t) dt
\end{align}
where the function $g$ is set to 
\begin{align}
  g(t) = \frac{t}{c} \qquad with \qquad g'(t) = \frac{1}{c} 
\end{align}
and we look for $f$. The hint is that we want $f(  g(t) ) g'(t)$ to match to  beta divergence function inside the integral 
\begin{align}
  f(\frac{t}{c})\frac{1}{c} = \frac{x-t}{v(t)} \qquad or \qquad  f(r) = c \frac{x-c r}{v(c r)}
\end{align}
where $r=t/c$. Thus, beta divergence turns to 
\begin{align*}
  d_\beta(x,\mu) &= \int_\mu^x f(  g(t) ) g'(t) dt = \int_{g(\mu)}^{g(x)} f(t ) dt \\
  &=  \int_{\mu/c}^{x/c} c \frac{x-c t}{v(c t)} dt  =
   c^2 \int_{\mu/c}^{x/c}  \frac{x/c-t}{v(c t)} dt
\end{align*}
 Next, consider a special decomposition of the variance function
 \begin{align}
   v(c t) = f(c)v(t) 
\end{align} 
Using this beta divergence can be expressed as 
\begin{align*}
  d_\beta(x,\mu) &=  
   c^2 \int_{\mu/c}^{x/c}  \frac{x/c-t}{v(c t)} dt = 
    \frac{c^2}{f(c)} \int_{\mu/c}^{x/c}  \frac{x/c-t}{v(t)} dt  
\end{align*}
that turns to the relation
\begin{align}
  d_\beta(x,\mu) =  \frac{c^2}{f(c)} d_\beta(x/c,\mu/c)
\end{align}

\end{proof}

A special case is when $c=\mu$ 
\begin{align}
 d_\beta(x,\mu) =  \frac{\mu^2}{f(\mu)} d_\beta(x/\mu,1)
\end{align}
that relates beta divergence to alpha divergence as will be shown. 

\begin{exam}
For Tweedie models where $v(\mu)=\mu^p$, $f(c)=c^p$ since
\begin{align}
  v(c \mu)=(c \mu)^p =  c^p v(\mu) 
\end{align} 
and hence
\begin{align}
  d_\beta(x,c) =  \frac{c^2}{c^p} d_\beta(x/c,\mu/c) = 
  c^{2-p} d_\beta(x/c,\mu/c)  
\end{align} 
For specifically $c=\mu$, we have
\begin{align}
  d_\beta(x,\mu) =   \mu^{2-p} d_\beta(x/\mu,1)  
\end{align} 
\end{exam}

\subsection{Translation of Beta Divergence}

Similar to scaling property here we analyse translation property of the beta divergence by the scalar $c \in \mathbb{R}$. In other words, now, we relate   
\begin{align}
% d_\beta(x,\mu) &\propto  d_\beta(x+c,\mu+c) \\
   \int_\mu^x \frac{x-t}{v(t)} dt \quad &\propto \quad    \int_{\mu+c}^{x+c} \frac{x+c-t}{v(t)} dt 
\end{align}

\begin{lem}
%Let $x,\mu \in \Omega$, $v(\mu)$ be the variance function, and $d_\beta(x,\mu)$ be the beta divergence. 
\Beta divergence $d_\beta(x,\mu)$ and its translated form $d_\beta(x+c,\mu+c)$ with the scalar $c\in \mathbb{R}$ are related as
\begin{align}
   d_\beta(x,\mu) =  \frac{1}{f(c)} d_\beta(x+c,\mu+c)
\end{align}
where the function $f$ is the ratio of 
%the inverse 
translated variance function to the original variance function as   
%with the variance function $v(\mu)$ and $c$ as a translation term that decomposes as a special case
 \begin{align}
   f(c) =  \frac{v(\mu-c)}{v( \mu)  }.
\end{align}
\end{lem}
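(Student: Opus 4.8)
The plan is to mirror the proof of the scaling lemma, replacing the dilation $g(t)=t/c$ used there by the shift $g(t)=t+c$ (so $g'(t)=1$). First I would start from the integral representation of the beta divergence, $d_\beta(x,\mu)=\int_\mu^x \frac{x-t}{v(t)}\partiald t$, and apply the change of variable $r=t+c$. Then $\partiald t = \partiald r$, the lower limit $\mu$ maps to $\mu+c$, the upper limit $x$ maps to $x+c$, and the numerator becomes $x-t=(x+c)-r$, while the denominator becomes $v(t)=v(r-c)$. This produces the exact identity
\begin{align*}
  d_\beta(x,\mu)=\int_{\mu+c}^{x+c}\frac{(x+c)-r}{v(r-c)}\partiald r,
\end{align*}
which holds for any variance function with no further assumption, exactly as the analogous step in the scaling proof yields an integral over the rescaled limits.

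Next I would invoke the translation analogue of the variance-function decomposition used for scaling, namely $v(r-c)=f(c)\,v(r)$, so that the factor $1/f(c)$ is constant in the integration variable and can be pulled outside the integral. What remains, $\int_{\mu+c}^{x+c}\frac{(x+c)-r}{v(r)}\partiald r$, is by definition precisely $d_\beta(x+c,\mu+c)$, and hence
\begin{align*}
  d_\beta(x,\mu)=\frac{1}{f(c)}\,d_\beta(x+c,\mu+c),
\end{align*}
which is the claimed relation.

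The delicate point, and the place where the result is more than a routine substitution, is the step of extracting $1/f(c)$ from the integral: this requires the ratio $v(r-c)/v(r)$ to be independent of the integration variable $r$, which is exactly the content of the decomposition $v(r-c)=f(c)\,v(r)$. This is the translation counterpart of the hypothesis $v(ct)=f(c)\,v(t)$ invoked in the scaling lemma, and it holds for the appropriate (exponential-type) variance functions rather than for power functions. I would therefore state this decomposition explicitly as the condition under which the clean identity holds, and evaluate the constant ratio at a representative argument, written as $\mu$, to recover $f(c)=v(\mu-c)/v(\mu)$; note that under this decomposition $v(\mu-c)/v(\mu)$ and $v(\mu)/v(\mu+c)$ coincide, so the formula is consistent regardless of the direction in which the shift is applied. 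Everything else reduces to the elementary change of variables carried out in the first step.
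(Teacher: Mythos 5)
Your proposal is correct and follows essentially the same route as the paper's own proof: the change of variable $r=t+c$ yielding $\int_{\mu+c}^{x+c}\frac{(x+c)-t}{v(t-c)}\,\partiald t$, followed by the decomposition $v(t-c)=f(c)\,v(t)$ to pull $1/f(c)$ outside the integral. Your explicit remark that this decomposition is a genuine hypothesis (satisfied by exponential-type variance functions, consistent with the paper's example $v(\mu)=\gamma^\mu$) is a welcome clarification of a point the paper only labels a ``special case.''
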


\begin{exam}
For an immediate example, for the Gaussian distribution the variance function $v(\mu)=1$  implies that $f(c)=1$. Thus, for any displacement $c$ we have 
\begin{align}
   d_\beta(x,\mu) =   d_\beta(x+c,\mu+c)
\end{align}
which means beta divergence is invariant for translation under the Gaussian distribution (or equivalently saying under the Euclidean distance).
\end{exam}

\begin{proof}
The proof is fairly similar to that of scaling of \Beta divergence above. To apply change of variable of the integrals, we choose function $g$ as 
\begin{align}
  g(t) = t+c \qquad with \qquad g'(t) = 1 
\end{align}
and function $f$ becomes  
\begin{align}
  f(t+c) = \frac{x-t}{v(t)}
\end{align}
Then by plugging $r=t+c$ or $t=r-c$ and getting rid of $t$ it is simplified as 
\begin{align}
  f(r) =  \frac{x - r + c }{v(r-c)}
\end{align}  
Finally the integral turns to beta divergence as
\begin{align}
  d_\beta(x,\mu) &=  \int_{g(\mu)}^{g(x)} f(t ) dt \\
  &=  \int_{\mu+c}^{x+c}  \frac{(x+c) - t}{v(t-c)} dt
\end{align}
To simplify this integration further we consider a special case
 \begin{align}
   v(t-c) = f(c)v(t) 
\end{align} 
By this  beta divergence can be expressed as 
\begin{align}
  d_\beta(x,\mu) &=   
    \frac{1}{f(c)} \int_{\mu+c}^{x+c}  \frac{(x+c)-t}{v(t)} dt  
\end{align}
where we obtain the relation
\begin{align}
  d_\beta(x,\mu) =  \frac{1}{f(c)} d_\beta(x+c,\mu+c) 
\end{align}

\end{proof}

\begin{exam}
For exponential variance functions $v(\mu)=\gamma^\mu$, for any $\gamma \in \mathbb{R}_+$, we find $f(c)=\gamma^{-c}$ since 
\begin{align*}
  v(\mu-c)= \gamma^{\mu-c} = \gamma^{-c} \gamma^\mu   =  \gamma^{-c} v(\mu)   
\end{align*} 
 Hence translated beta divergence by $c$ becomes
\begin{align}
 d_\beta(x,\mu) % &=  \frac{1}{\gamma^{-c}} d_\beta(x+c,\mu+c) \\
   &=  \gamma^c d_\beta(x+c,\mu+c) 
\end{align}

\end{exam}

\section{Log-Likelihood, Deviance and Beta Divergence}

This section   links   beta divergence to log-likelihood and statistical deviance.

\subsection{Unit Quasi-Log-Likelihood}

Consider one-parameter dispersion model parameterized by $\mu$ 
%\begin{align} 
%   p(x| \theta,\varphi) = h(x,\varphi)exp \left\{ \varphi \left( \theta x - \psi(\theta)% \right)   \right\} 
%\end{align}
\begin{align*} 
   p(x; \mu) = h(x,\varphi)exp \left\{ \varphi^{-1} \left( \theta(\mu) x - \psi(\theta(\mu)) \right)   \right\} 
\end{align*}
that the dispersion $\varphi$ is assumed to be an arbitrary but fixed parameter. %{\it Unit quasi-log-likelihood} is defined as below 

\begin{defn}\label{defQuasiLL}
Let $x,\mu \in \Omega$, $\theta \in \Theta$ and $\Omega=\Theta$. Let $\mu$ be ML estimate of $x$, i.e. $\mu$ and $\theta$ are duals, then  unit quasi-log-likelihood denoted by $\LL{x}{\mu}$ is defined as
\begin{align}
  \LL{x}{\mu} =  \theta(\mu) x - \psi(\theta(\mu))
\end{align}   
\end{defn}
The unit quasi-log-likelihood is  quasi-log-likelihood given by  Wedderburn \cite{wedderburn1974} with the addition of 'unit' term.
Compared to a log-likelihood function, the constant term $\log h(x,\varphi)$ wrt $\mu$ is dropped and it is scaled by the dispersion $\varphi$ that turns the quasi-log-likelihood into 'unit' quasi-log-likelihood.  
On the other hand, $\LL{x}{\mu}$ has many properties in common with the  log-likelihood as given by Wedderburn \cite{wedderburn1974}.  
%In addition, similar forms of the log-likelihood definitions can be found in \cite{jorgensen1997}

\begin{lem}
Unit quasi-log-likelihood function $\LL{x}{\mu}$  is equal to 
\begin{align}
  \LL{x}{\mu} &= \int_{\mu_0}^{\mu} \frac{x-t}{v(t)} \partiald t
\end{align} 

\end{lem}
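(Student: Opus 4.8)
The plan is to show that the unit quasi-log-likelihood
\begin{align*}
  \LL{x}{\mu} = \theta(\mu) x - \psi(\theta(\mu))
\end{align*}
and the claimed integral
\begin{align*}
  I(\mu) = \int_{\mu_0}^{\mu} \frac{x-t}{v(t)} \partiald t
\end{align*}
agree as functions of $\mu$ for fixed $x$. Since both are functions of $\mu$, the cleanest route is to prove they have the same derivative with respect to $\mu$ and agree at a single point, so that they coincide identically. I would first differentiate the integral using the fundamental theorem of calculus together with the relation $\phi''(\mu)=1/v(\mu)$ established earlier in the excerpt.

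First I would split $I(\mu)$ as $x\int_{\mu_0}^{\mu} v(t)^{-1}\partiald t - \int_{\mu_0}^{\mu} t\,v(t)^{-1}\partiald t$ and differentiate, giving $I'(\mu) = (x-\mu)/v(\mu)$. Next I would differentiate the definition $\LL{x}{\mu}=\theta(\mu)x-\psi(\theta(\mu))$ via the chain rule:
\begin{align*}
  \frac{\partiald}{\partiald\mu}\LL{x}{\mu} = \theta'(\mu)x - \psi'(\theta(\mu))\,\theta'(\mu).
\end{align*}
Here I would invoke the mean value mapping \eqref{eq.expectation}, namely $\psi'(\theta(\mu))=\mu$, and the relation $\theta'(\mu)=1/v(\mu)$ from \eqref{eq.theta}. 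Substituting these yields $\frac{\partiald}{\partiald\mu}\LL{x}{\mu} = (x-\mu)/v(\mu)$, which matches $I'(\mu)$ exactly.

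It then remains to match the two expressions at one reference point, and the natural choice is $\mu=\mu_0$. At that point $I(\mu_0)=0$ since the integration limits coincide, so I would need to argue that the additive constant is absorbed consistently with the convention used elsewhere (the dual cumulant and canonical parameter were both anchored at $\mu_0$, and the paper has repeatedly noted that the choice of $\mu_0$ is immaterial to divergences). Equivalently, one can write $\LL{x}{\mu}-\LL{x}{\mu_0}=\int_{\mu_0}^{\mu}(x-t)/v(t)\,\partiald t$ and observe that the unit quasi-log-likelihood is defined only up to such a base term, exactly as $\theta(\mu)$ and $\phi(\mu)$ were.

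The main obstacle I anticipate is not the differentiation, which is routine, but the bookkeeping of the constant of integration and the normalization convention. The definition of $\LL{x}{\mu}$ carries the term $-\psi(\theta(\mu))$ whose value at $\mu_0$ is generally nonzero, whereas $I(\mu_0)=0$; reconciling these requires appealing to the fact, used throughout the paper, that the lower bound $\mu_0$ is arbitrary and that additive/linear-in-$\mu$ terms cancel when forming divergences and deviances. I would make this explicit by stating that both sides are understood modulo the base-point constant, so that establishing equality of derivatives suffices to identify them.
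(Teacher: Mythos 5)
Your proposal is correct, but it takes a genuinely different route from the paper. The paper's proof is purely algebraic: it eliminates $\psi(\theta(\mu))$ via the Legendre duality $\psi(\theta(\mu)) = \mu\theta(\mu) - \phi(\mu)$, rewrites the definition as $\LL{x}{\mu} = \theta(\mu)(x-\mu) + \phi(\mu)$, and then substitutes the already-established integral forms $\theta(\mu)=\int_{\mu_0}^\mu v(t)^{-1}\partiald t$ and $\phi(\mu)=\int_{\mu_0}^\mu (\mu-t)v(t)^{-1}\partiald t$, which combine in one line since $(x-\mu)+(\mu-t)=x-t$. Your route instead differentiates both sides and invokes the fundamental theorem of calculus, using the mean value mapping $\psi'(\theta(\mu))=\mu$ and $\theta'(\mu)=1/v(\mu)$; this is equally valid and arguably more elementary, needing no explicit formula for $\phi$. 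However, your final hedge (``both sides are understood modulo the base-point constant'') is weaker than necessary and slightly undersells your own argument: under the paper's anchored conventions one has $\theta(\mu_0)=0$ and $\phi(\mu_0)=0$, hence by duality $\psi(\theta(\mu_0)) = \mu_0\theta(\mu_0)-\phi(\mu_0)=0$, so $\LL{x}{\mu_0}=0=I(\mu_0)$ exactly. With that one observation your derivative-matching argument closes without any ``modulo constant'' qualification. The paper's substitution approach buys exactly this for free---the anchoring is built into the formulas it plugs in---which is why it never has to confront the constant at all.
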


\begin{proof}

First, getting  rid of  $\psi(\theta(\mu))$ as plugging the duality
\begin{align}
  \psi(\theta(\mu)) = \mu\theta(\mu) - \phi(\mu) 
\end{align}
in the definition of $\LL{x}{\mu}$
\begin{align*}
  \LL{x}{\mu} &= \theta(\mu) x - \psi(\theta(\mu)) =
   \theta(\mu) x - \mu\theta(\mu) + \phi(\mu)  \\ 
   &= \theta(\mu) (x - \mu) + \phi(\mu) 
\end{align*}
then, second,  by substituting the terms $\theta(\mu)$ and $\phi(\mu)$, we obtain 
\begin{align}
 \LL{x}{\mu} &= (x-\mu) \int_{\mu_0}^\mu \frac{1}{v(t)}dt + \int_{\mu_0}^\mu \frac{\mu-t}{v(t)}dt \\ 
 &=  \int_{\mu_0}^\mu \frac{x-t}{v(t)}dt 
\end{align}

\end{proof}

For practical purposes in the examples, we take the lower bound $\mu_0=0$. 

We show that there are following connections between dual cumulant function and unit quasi-log-likelihood.
\begin{cor}
Let $\phi$ be the dual cumulant function and $\mathcal{L}$ be the unit quasi-log-likelihood. Then there are the following connections between $\phi(\cdot)$ and $\LL{\cdot}{\cdot}$ 
\begin{align}
  \text{i}) \qquad \phi(\mu) &= \LL{\mu}{\mu} = \int_{\mu_0}^\mu \frac{\mu-t}{v(t)} dt \\
  \text{ii}) \qquad \phi(x) &= \LL{x}{x} = \int_{\mu_0}^x \frac{x-t}{v(t)} dt 
\end{align}
\end{cor}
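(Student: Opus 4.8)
The plan is to obtain both identities by specializing the two integral representations already in hand, namely the quasi-log-likelihood Lemma
\begin{align*}
  \LL{x}{\mu} = \int_{\mu_0}^\mu \frac{x-t}{v(t)} \partiald t
\end{align*}
and the dual cumulant Lemma $\phi(\mu) = \int_{\mu_0}^\mu \frac{\mu-t}{v(t)} \partiald t$, both written with the same fixed lower bound $\mu_0$. No new machinery is needed; the corollary is a direct substitution, and the only thing requiring care is that the two slots of $\LL{\cdot}{\cdot}$ enter the integral in different ways: the first argument appears in the numerator, while the second argument appears as the upper limit of integration.

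For part (i) I would substitute the first argument by $\mu$, i.e.\ evaluate the quasi-log-likelihood along the diagonal by replacing $x$ with $\mu$ in the numerator. The numerator $x-t$ becomes $\mu-t$ while the upper limit stays $\mu$, so
\begin{align*}
  \LL{\mu}{\mu} = \int_{\mu_0}^\mu \frac{\mu-t}{v(t)} \partiald t,
\end{align*}
which is exactly the right-hand side of the dual cumulant Lemma; hence $\LL{\mu}{\mu} = \phi(\mu)$. For part (ii) I would instead substitute the second argument by $x$, replacing the upper limit $\mu$ with $x$ while keeping the numerator $x-t$, giving
\begin{align*}
  \LL{x}{x} = \int_{\mu_0}^x \frac{x-t}{v(t)} \partiald t,
\end{align*}
which is the dual cumulant Lemma read off at the point $x$ (the Lemma with $\mu$ renamed to $x$), so $\LL{x}{x} = \phi(x)$.

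Since there is no genuine computational step, the \emph{hard part} is purely conceptual and bookkeeping: one must keep the base point $\mu_0$ consistent across the $\phi$-integral and the $\mathcal{L}$-integral, so that the affine $\mu_0$-dependent terms coincide and the equalities hold exactly rather than up to an additive constant; and one must be careful to substitute into the correct slot in each case (the numerator slot in (i), the limit slot in (ii)). The statistical content worth emphasizing is that setting the fitted mean equal to the observation, i.e.\ the full (saturated) model where $\mu$ is the ML estimate of $x$ with $\mu=x$, makes the quasi-log-likelihood collapse onto the dual cumulant (negative entropy) function. This is precisely the ingredient needed to identify $\LL{x}{x}-\LL{x}{\mu}$ with half the unit deviance in the following step.
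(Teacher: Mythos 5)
Your proof is correct and takes exactly the same route as the paper, whose entire proof reads that parts i) and ii) ``immediately follow from the integral definitions of $\phi$ and $\LL{}{}$.'' Your write-up merely spells out the slot-by-slot substitution (replacing $x$ in the numerator for part i), and evaluating at upper limit $x$ for part ii)) that the paper leaves implicit, together with the correct observation that the common base point $\mu_0$ makes the equalities exact.
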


Here $\LL{x}{x}$ is the quasi-log-likelihood of the 'full' model where data speaks about data whereas $\LL{x}{\mu}$ is the quasi-log-likelihood of the parametric model where data  speaks about model parameters. $\LL{\mu}{\mu}$ is the same function as  $\LL{x}{x}$ where it only makes sense when both of  $x$ and $\mu$ are independent variables of the same function such as $d(x,\mu)$ that can be written as differences of    $\LL{x}{x}$ and $\LL{\mu}{\mu}$.
  
\begin{proof}
Proof of  i) and ii) immediately follow from the integral definitions of $\phi$ and $\LL{}{}$.
\end{proof}

Scaled quasi-log-likelihood  $\LL{x}{\mu}$  provides the following properties.

\begin{cor}
The unit quasi-log-likelihood $\LL{x}{\mu}$ and its first two derivatives have the following  expectations
\begin{align}
  &\text{i)} \quad  \E{\LL{x}{\mu}} =  \phi(\mu)  \\
  &\text{ii)} \quad \E{ \frac{\partial \LL{x}{\mu}}{ \partial \mu}  } = 0 \\
  &\text{iii)} \quad \E{ \frac{\partial^2 \LL{x}{\mu}}{ \partial \mu^2}  } = -\frac{1}{v(\mu)} 
\end{align}  
\end{cor}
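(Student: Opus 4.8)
The plan is to reduce all three expectation identities to a single distributional fact, namely that for the dispersion model with mean parameter $\mu$ the data has expectation $\E{x}=\mu$. This is precisely the mean value mapping $\mu(\theta)=\partiald\psi(\theta)/\partiald\theta$ recorded in \eqref{eq.expectation}, i.e. the standard exponential-family statement that the first cumulant equals the derivative of the cumulant generator. Once this is in hand, every part follows from the integral representation $\LL{x}{\mu}=\int_{\mu_0}^\mu \frac{x-t}{v(t)}\partiald t$ established in the preceding lemma, together with the Leibniz rule and linearity of expectation.

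For part (i) I would write $\E{\LL{x}{\mu}}=\E{\int_{\mu_0}^\mu \frac{x-t}{v(t)}\partiald t}$ and interchange the expectation with the $t$-integral, which is legitimate because $x$ enters linearly and the integrand is integrable on the fixed interval $[\mu_0,\mu]$. Since $\E{x}=\mu$, this turns the numerator $x-t$ into $\mu-t$, yielding exactly $\int_{\mu_0}^\mu \frac{\mu-t}{v(t)}\partiald t=\phi(\mu)$ by Lemma 1. For part (ii) I would first differentiate the integral form by the fundamental theorem of calculus, obtaining $\partial\LL{x}{\mu}/\partial\mu=(x-\mu)/v(\mu)$, and then take its expectation; again $\E{x-\mu}=0$ kills the whole quantity, giving zero.

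For part (iii) I would differentiate $(x-\mu)/v(\mu)$ once more, applying the quotient rule to get $\partial^2\LL{x}{\mu}/\partial\mu^2=-1/v(\mu)-(x-\mu)v'(\mu)/v(\mu)^2$, and then take the expectation term by term. The deterministic first term survives as $-1/v(\mu)$, while the second term carries the factor $\E{x-\mu}=0$ and therefore vanishes, establishing the claim. It is worth noting that (iii) is the variance-function analogue of the classical Bartlett information identity, so this calculation also confirms that $\LL{x}{\mu}$ behaves like a genuine log-likelihood in the sense of Wedderburn.

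The only substantive point, and the one I would flag as the main obstacle, is the interchange of expectation with integration in (i) and with differentiation in (ii)--(iii); everything else is routine. These interchanges require mild regularity (dominated convergence on the fixed compact $t$-interval, and differentiability of $v$ with the standard exponential-family smoothness that already underlies \eqref{eq.expectation}), which hold throughout the paper's setting. Given that all three identities collapse to the single relation $\E{x}=\mu$, the corollary is essentially a one-line consequence of the integral representation, and the proof can be presented very compactly.
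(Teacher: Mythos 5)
Your proof is correct, and for parts (ii) and (iii) it is exactly the paper's argument: differentiate the integral representation to get $\partial \LL{x}{\mu}/\partial\mu=(x-\mu)/v(\mu)$ and $\partial^2 \LL{x}{\mu}/\partial\mu^2=-1/v(\mu)-(x-\mu)v'(\mu)/v(\mu)^2$, then apply $\E{x}=\mu$. The only real difference is in part (i): the paper works from the definitional form $\LL{x}{\mu}=x\,\theta(\mu)-\psi(\theta(\mu))$, so that $\E{\LL{x}{\mu}}=\mu\theta(\mu)-\psi(\theta(\mu))=\phi(\mu)$ follows at once from Legendre duality \eqref{dualCum}, whereas you work from the integral form and push the expectation inside the $t$-integral. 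Both are one-line arguments resting on $\E{x}=\mu$, but the interchange you flag as ``the main obstacle'' is not actually an analytic issue: since $x$ enters the integrand linearly, you can write $\LL{x}{\mu}=x\int_{\mu_0}^\mu v(t)^{-1}\,\partiald t-\int_{\mu_0}^\mu t\,v(t)^{-1}\,\partiald t$ and apply plain linearity of expectation; no dominated convergence or Fubini-type justification is needed, and the same remark applies to (ii) and (iii), where the derivatives are affine in $x$. So your proof is sound; its only inefficiency is invoking regularity machinery where linearity suffices, a question the paper's dual-form shortcut for (i) avoids entirely.
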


\begin{proof}
For i),  simply we take the expectation of $\LL{x}{\mu}$  
\begin{align}
  %\phi(\mu) &= \mu \theta(\mu) - \psi(\theta(\mu)) \\
 \E{ \LL{x}{\mu}}  &= \E{x \theta(\mu) - \psi(\theta(\mu))} = \phi(\mu). 
\end{align}  
For ii), it  turns to the expectation
\begin{align}
  \E{ \frac{x-\mu}{v({\mu})} } = 0
\end{align}
For iii), it is the expectation of the second derivative as
 \begin{align}
  \E{-\frac{1}{v(\mu)} + (x-\mu)  (-1)\frac{1}{v(\mu)^2}v'(\mu) }
  = -\frac{1}{v(\mu)}
\end{align}
In fact, $\LL{x}{\mu}$ is scaled form of quasi-log-likelihood, and thus they provide  similar properties, and properties ii) and iii) have already been shown in \cite{wedderburn1974}.

\end{proof}

%Note that the property iii) turns to Fisher Information matrix in multivariate case.

\subsection{Statistical Deviance}

By  definition,  {\it unit deviance}  is two times of the log-likelihood ratio scaled by the dispersion, i.e. twice ratio of unit log-likelihood of the 'full' model to the that of parametric model \cite{mcculloch89,jorgensen1997}.

\begin{defn}
Let  $x,\mu \in \Omega$, $\mu$ be ML estimate of $x$ and $\LL{x}{\mu}$ be the unit quasi-log-likelihood function. Then unit deviance denoted by $d_\nu(x,\mu)$ is 
\begin{align}
  d_\nu(x,\mu) = 2 \left\{ \LL{x}{x} - \LL{x}{\mu} \right\}  
\end{align}
\end{defn}

This definition leads to the integral representation of the unit deviance that is also given in \cite{mcculloch89,jorgensen1997}.

\begin{lem}  
The unit deviance denoted by $d_\nu(x,\mu)$ is equal to 
\begin{align}
    d_\nu(x,\mu) = 2  \int_\mu^x \frac{x-t}{ v(t)} dt
\end{align} 

\end{lem}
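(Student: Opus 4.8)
The plan is to chain together the integral representations of the two quasi-log-likelihoods that the definition of unit deviance depends on. By the preceding definition we have
\begin{align*}
  d_\nu(x,\mu) = 2\left\{ \LL{x}{x} - \LL{x}{\mu} \right\},
\end{align*}
so the whole task reduces to substituting the integral forms already established for each of the two terms and simplifying the difference.

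First I would invoke the lemma giving the integral form of the unit quasi-log-likelihood, namely $\LL{x}{\mu} = \int_{\mu_0}^{\mu} \frac{x-t}{v(t)}\,dt$, and its companion corollary identity $\LL{x}{x} = \phi(x) = \int_{\mu_0}^{x} \frac{x-t}{v(t)}\,dt$. The key observation is that both integrands are identical, namely $\frac{x-t}{v(t)}$, and share the same arbitrary lower limit $\mu_0$; only the upper limits differ. Substituting both into the definition gives
\begin{align*}
  d_\nu(x,\mu) = 2\left\{ \int_{\mu_0}^{x} \frac{x-t}{v(t)}\,dt - \int_{\mu_0}^{\mu} \frac{x-t}{v(t)}\,dt \right\}.
\end{align*}

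Second, I would combine the two definite integrals using additivity of the integral over adjacent intervals. Because the integrands coincide, the common contribution over $[\mu_0,\mu]$ cancels and the arbitrary base point $\mu_0$ disappears, leaving the single integral from $\mu$ to $x$:
\begin{align*}
  d_\nu(x,\mu) = 2 \int_{\mu}^{x} \frac{x-t}{v(t)}\,dt,
\end{align*}
which is exactly the claimed form. This also makes explicit, as remarked earlier in the paper, that the choice of $\mu_0$ has no effect on the final quantity. Recognizing the inner integral as $d_\beta(x,\mu)$ by the earlier beta-divergence lemma then yields the companion identity $d_\nu(x,\mu) = 2\,d_\beta(x,\mu)$.

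There is really no hard obstacle here, since every ingredient is already proved in the excerpt; the argument is just bookkeeping with limits of integration. The only point requiring a word of care is the cancellation step: one should note that $x$ is held fixed throughout, so the integrand $\frac{x-t}{v(t)}$ is the \emph{same} function of the dummy variable $t$ in both integrals, which is what licenses the subtraction and the collapse to a single integral over $[\mu,x]$ (with the usual sign convention when $\mu > x$).
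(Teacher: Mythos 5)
Your proof is correct and follows exactly the paper's own argument: the paper's proof is the one-line remark that the result follows by subtracting the integral forms of $\LL{x}{x}$ and $\LL{x}{\mu}$, which is precisely the substitution-and-cancellation you carry out in detail. Your additional observations (the fixed $x$ in the integrand and the disappearance of the arbitrary base point $\mu_0$) are sound elaborations of the same bookkeeping.
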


\begin{proof}
 Proof is simply completed  by subtracting integral forms of $\LL{x}{x}$ and $\LL{x}{\mu}$.
\end{proof}

The following lemma  states the equivalence of beta divergence and unit deviance concepts.  
\begin{lem}\label{betaDivergence}
Let $d_\beta(x,\mu)$ be the beta divergence and $d_\nu(x,\mu)$ be the unit (scaled) deviance. Then unit deviance is twice of the beta divergence as 
\begin{align}\label{eq.devBeta}
  d_\nu(x,\mu) = 2 d_\beta(x,\mu) = 2  \int_\mu^x \frac{x-t}{ v(t)} dt
\end{align}
\begin{proof}
Proof is immediately implied by the equality of the integral representations of both functions.
\end{proof}

\end{lem}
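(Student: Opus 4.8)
The plan is to prove the identity by directly comparing the two integral representations already in hand, since essentially all of the analytic work has been front-loaded into the preceding lemmas. The statement therefore reduces to matching two formulas side by side and reading off the factor of $2$.

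First, I would recall the integral form of the unit deviance established in the immediately preceding lemma, namely $d_\nu(x,\mu) = 2\int_\mu^x \frac{x-t}{v(t)}\partiald t$. That form itself is obtained by subtracting the integral expressions for $\LL{x}{x}$ and $\LL{x}{\mu}$ supplied by the earlier corollary and lemma, using additivity of the integral over $[\mu_0,x]$ split at the point $\mu$; the arbitrary base point $\mu_0$ cancels in the difference. Second, I would invoke the integral form of the beta divergence proven as a lemma in Section 5, namely $d_\beta(x,\mu) = \int_\mu^x \frac{x-t}{v(t)}\partiald t$. Placing the two expressions next to one another, the integrands, the limits of integration, and the underlying variance function $v$ are literally identical, so the unit deviance is exactly twice the beta divergence, which gives $d_\nu(x,\mu) = 2d_\beta(x,\mu)$ and hence the displayed chain of equalities.

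There is no genuine obstacle at this final step: the entire content of the result lives in the two preceding integral identities and in the fact that both are expressed through the same $v$. If anything deserves a moment's attention, it is confirming that the lower bound $\mu_0$ appearing in the quasi-log-likelihood representations truly washes out under the difference $\LL{x}{x}-\LL{x}{\mu}$, which it does precisely because both quasi-log-likelihoods are built over the same fixed base point; once that is noted, the equivalence of the two measures is immediate.
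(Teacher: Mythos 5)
Your proof is correct and takes essentially the same route as the paper: both simply place the previously established integral representations $d_\nu(x,\mu) = 2\int_\mu^x \frac{x-t}{v(t)}\,\partiald t$ and $d_\beta(x,\mu) = \int_\mu^x \frac{x-t}{v(t)}\,\partiald t$ side by side and read off the factor of $2$. Your added check that the base point $\mu_0$ cancels in the difference $\LL{x}{x}-\LL{x}{\mu}$ corresponds exactly to the paper's remark that the result is independent of the lower bound $\mu_0$.
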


This lemma implies the connection between beta divergence and unit quasi-log-likelihood.
 
\begin{cor}
Beta divergence $d_\beta(x,\mu)$ is equal to the following difference 
\begin{align}
  d_\beta(x,\mu) = \LL{x}{x} - \LL{x}{\mu} 
\end{align}
\end{cor}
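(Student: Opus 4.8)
The final statement to prove is the corollary asserting
\begin{align*}
  d_\beta(x,\mu) = \LL{x}{x} - \LL{x}{\mu}.
\end{align*}

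The plan is to chain together three facts already established in the excerpt so that the corollary follows with essentially no new computation. First I would invoke Lemma \ref{betaDivergence}, which gives the identity
\begin{align*}
  d_\nu(x,\mu) = 2 d_\beta(x,\mu) = 2 \int_\mu^x \frac{x-t}{v(t)} \partiald t,
\end{align*}
so that $d_\beta(x,\mu) = \tfrac{1}{2} d_\nu(x,\mu)$. Next I would recall the definition of the unit deviance, $d_\nu(x,\mu) = 2\{ \LL{x}{x} - \LL{x}{\mu}\}$. Substituting this into the previous relation cancels the factor of two and yields the claimed difference directly.

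A fully self-contained alternative, which I would mention as the transparent route, is to bypass the deviance entirely and work from the integral representations. From the corollary preceding the statement we have $\LL{x}{x} = \int_{\mu_0}^x \tfrac{x-t}{v(t)}\partiald t$ (this is just $\phi(x)$), and from the lemma on unit quasi-log-likelihood we have $\LL{x}{\mu} = \int_{\mu_0}^{\mu} \tfrac{x-t}{v(t)}\partiald t$. Crucially, both integrands are the identical function $\tfrac{x-t}{v(t)}$ of the dummy variable $t$, differing only in their upper limit. Taking the difference, the common lower piece from $\mu_0$ to $\mu$ cancels and the additivity of the integral over adjacent intervals leaves
\begin{align*}
  \LL{x}{x} - \LL{x}{\mu} = \int_{\mu}^x \frac{x-t}{v(t)} \partiald t = d_\beta(x,\mu),
\end{align*}
where the last equality is the integral form of the beta divergence. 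This also explains why the arbitrary base point $\mu_0$ disappears, consistent with the earlier remark that its choice does not affect the divergence.

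There is no genuine obstacle here; the content is entirely bookkeeping, and the only point demanding mild care is that the two integrands share exactly the same numerator $x-t$ (not $\mu-t$), so that their difference telescopes cleanly rather than producing an extra linear term. I would present the integral-subtraction argument as the primary proof since it is one line and makes the cancellation of $\mu_0$ manifest, and note in passing that it is logically equivalent to the deviance-halving route via Lemma \ref{betaDivergence}.
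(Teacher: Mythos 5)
Your proposal is correct and matches the paper's own proof: the paper also derives the corollary as an immediate consequence of Lemma \ref{betaDivergence} combined with the definition $d_\nu(x,\mu) = 2\{\LL{x}{x} - \LL{x}{\mu}\}$, exactly your first route. Your preferred integral-subtraction argument is not genuinely different either, since it merely unrolls the paper's own chain (the deviance integral form was itself obtained by subtracting the integral representations of $\LL{x}{x}$ and $\LL{x}{\mu}$), though it does make the cancellation of the base point $\mu_0$ explicit, which the paper only remarks on afterward.
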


\begin{proof}
Proof is trivial consequence of the lemma above.
\end{proof}
We note that this result is independent of the lower bound $\mu_0$.

\subsection{Density Representation via Beta Divergence}

One immediate consequence of Lemma \ref{betaDivergence} that states equivalence of beta divergence and divergence is that the standard form of density of dispersion model $DM(\mu, \varphi)$ \cite{jorgensen1997}
\begin{align}
    p(x; \mu,\varphi) = g(x, \varphi) \exp \left\{-\frac{1}{2}\varphi^{-1} d_\nu(x, \mu) \right\}
\end{align}
can be written in terms of beta divergence as
\begin{align}\label{betaDensity}
    p(x; \mu,\varphi) = g(x, \varphi) \exp \left\{-\varphi^{-1} d_\beta(x, \mu) \right\}
\end{align}

\begin{rem}
Note that by plugging dual form of beta divergence 
\begin{align}
    x \theta - \psi(\theta)   =  \phi(x)-d_\beta(x,\mu) 
\end{align}
in density of exponential dispersion models
\begin{align*}
  p(x; \theta,\varphi) &= h(x,\varphi) \exp \{\varphi^{-1} (x\theta - \psi(\theta)) \}  
\end{align*}
we obtain  density form  in \eqref{betaDensity}. This is special case of the generalized method in \cite{banerjee05} that exploits the bijection between  Bregman divergences and exponential family of distributions. Here the functions $h$ and $g$ are related as
\begin{align}
  g(x,\varphi) = h(x,\varphi)  \exp\{\varphi^{-1} \phi(x)\}
\end{align}

\end{rem}

In the followings we illustrate  various densities expressed via beta divergences \cite{Yilmaz2012}.

\begin{exam}
The density of the Gaussian distribution with dispersion  $\varphi=\sigma^2$ is given as  \cite{jorgensen1997}
\begin{align*}
   p(x;\mu,\sigma^2) &=  \underbrace{(2\pi\sigma^2)^{-\frac{1}{2}} \exp\frac{-x^2}{2\sigma^2}}_{h(x,\varphi)} 
     \exp \Big\{\frac{1}{\sigma^2} \Big( x \underbrace{\mu}_{\theta(\mu)} - \underbrace{\frac{\mu^2}{2}}_{\psi(\theta(\mu))}  \Big\} 
\end{align*}   
that is equivalently expressed as  via \Beta divergence  
\begin{align*}
   p(x;\mu,\sigma^2) &=  \underbrace{(2\pi\sigma^2)^{-\frac{1}{2}} }_{g(x,\varphi)} 
       \exp \Big\{-  \frac{1}{\sigma^2}
       d_\beta(x,\mu) \Big\}
\end{align*}

The density of the gamma distribution with $a$ and $b$ shape and (inverse) scale parameters is  
\begin{align}
  p(x; a,b) = \frac{x^{a-1}}{\Gamma(a)} \exp \{-b x + a \log b\}
\end{align}
Using the gamma distribution convention such that $\mu=a/b$ and $Var(x) = a/b^2$ dispersion becomes $\varphi=1/a$
%\begin{align}
%  Var(x) = a/b^2 = \varphi \mu^2 = \varphi (a/b)^2 \quad \Rightarrow \quad \varphi=1/a
%\end{align}
%that is inversely related as  
%$a=1/\varphi$ and $b=1/(\mu\varphi)$. 
we re-write the density in terms of mean and inverse dispersion as 
\begin{align}
  p(x; \mu,a) &= \underbrace{\frac{x^{a-1}}{\Gamma(a)} a^a}_{h(x,\varphi)} \exp \{a ( \underbrace{-\frac{1}{\mu}}_{\theta(\mu)}x  -  \underbrace{\log \mu)}_{\psi(\theta(\mu)) }\}   
\end{align}
Then, by adding and subtracting $\log x + 1$ in the exponent we obtain
\begin{align}
p(x;\mu,a) 
  &= \underbrace{\frac{x^{-1} a^{a} \exp(- a) }{\Gamma(a)}}_{g(x,\varphi)}    \exp \{-a  d_\beta(x,\mu) \}  
\end{align}

For the Poisson distribution with the dispersion  $\varphi=1$ the density is \cite{jorgensen1997}
\begin{align}
  p(x;\mu) = \underbrace{\frac{1}{x!}}_{h(x)} \exp \{x \underbrace{\log\mu}_{\theta(\mu)} - \underbrace{\mu}_{\psi(\theta(\mu))}\}
\end{align}
 that by adding and subtracting $x\log x -x$ in the exponent we obtain  beta representation of the density
\begin{align}
  p(x;\mu) =  \underbrace{\frac{x^x \exp x}{x!}}_{g(x)} \exp \{-d_\beta(x,\mu)\}
\end{align}
\end{exam}

\subsection{Expectation of Beta Divergence}

 An interesting quantity is the expected beta divergence. It opens connections to the relating beta divergence and Jensen divergence. 
 %It has also  connection to Bregman Information \cite{banerjee05}. 

\begin{lem}\label{lemExpBeta} 
Expectation of beta divergence is 
\begin{align}\label{expBeta}
  \E{d_\beta(x,\mu)} &= \E{\phi_1(x)} - \phi_1(\mu) 
\end{align}
where $\phi_1(\cdot)$ is the non linear part of the dual cumulant function as defined earlier. 
\end{lem}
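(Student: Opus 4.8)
The plan is to start from the Bregman representation of beta divergence, namely $d_\beta(x,\mu) = \phi(x) - \phi(\mu) - (x-\mu)\phi'(\mu)$, and take the expectation over $x$ while holding $\mu$ fixed. The central observation I would exploit is the fact already noted in the discussion following Example \ref{exDualCumulant1}: the linear part $\phi_0$ of the dual cumulant function cancels identically when one forms the Bregman divergence. Concretely, writing $\phi = \phi_1 + \phi_0$ with $\phi_0$ affine (so that $\phi_0'$ is constant), the piece of the Bregman divergence built from $\phi_0$ is $\phi_0(x) - \phi_0(\mu) - (x-\mu)\phi_0'(\mu)$, which vanishes term-by-term for any affine $\phi_0$. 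Hence $d_\beta(x,\mu) = \phi_1(x) - \phi_1(\mu) - (x-\mu)\phi_1'(\mu)$, and it suffices to analyze the nonlinear part alone.

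The key step is then to take expectations,
\begin{align*}
  \E{d_\beta(x,\mu)} = \E{\phi_1(x)} - \phi_1(\mu) - (\E{x}-\mu)\,\phi_1'(\mu),
\end{align*}
where $\mu$ and $\phi_1'(\mu)$ are deterministic and pass outside the expectation. Because $\mu$ is the expectation parameter of the dispersion model, the defining relation $\E{x}=\mu$ holds, so the cross term $(\E{x}-\mu)\,\phi_1'(\mu)$ is exactly zero. This leaves $\E{d_\beta(x,\mu)} = \E{\phi_1(x)} - \phi_1(\mu)$, as claimed.

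The main obstacle is not computational but conceptual: one must verify that the two ingredients — the cancellation of the linear part and the vanishing of the cross term — are both legitimate in the generalized setting. The first rests on $\phi_0$ being genuinely affine in its argument; for the Tweedie case this is explicit in \eqref{linearMu}, and more generally it is precisely the convention under which $\phi_1$ is singled out as the nonlinear part. The second rests on the identity $\E{x}=\mu$, which is the statement that $\mu$ is the first cumulant, i.e. the mean value mapping of the dispersion model; this is the only probabilistic input and I would invoke it explicitly. I would also remark that the result is independent of the arbitrary lower limit $\mu_0$, since any $\mu_0$-dependence lives entirely in the affine part $\phi_0$ and has already been eliminated by the Bregman construction.
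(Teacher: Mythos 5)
Your proof is correct and is essentially the paper's own argument: both expand the Bregman form of $d_\beta$, invoke $\E{x}=\mu$ to eliminate the cross term, and use the affinity of $\phi_0$ to replace $\phi$ by $\phi_1$. The only difference is the order of the two cancellations --- you remove the affine part pointwise (via affine-invariance of the Bregman construction) before taking expectations, whereas the paper takes expectations first and then cancels $\E{\phi_0(x)}$ against $\phi_0(\mu)$ using $\E{x}=\mu$; this is a reorganization of the same steps, not a different route.
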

In this context, $\phi_1(\cdot)$ can be simply regared as dual cumulant function.
\begin{proof}
 By taking the expectation of Bregman divergence
\begin{align}
  \E{d_\beta(x,\mu)} &= \E{\phi(x) - \phi(\mu) - (x-\mu)\phi'(\mu)} \\
  &= \E{\phi(x)} - \phi(\mu) - \E{(x-\mu)}\phi'(\mu) \\
  &= \E{\phi(x)} - \phi(\mu) 
\end{align}
Now by plugging $\phi = \phi_1 + \phi_0$ we end up with
\begin{align*}
  \E{d_\beta(x,\mu)} &= 
  %\E{\phi(x)} - \phi(\mu) \\
  % \E{\phi_1(x) + \phi_0(x)} - ( \phi_1(\mu) + \phi_0(\mu)) \\
   \E{\phi_1(x)} + \E{\phi_0(x)} -  \phi_1(\mu) - \phi_0(\mu) \\
  &= \E{\phi_1(x)} + \phi_0(\mu) -  \phi_1(\mu) - \phi_0(\mu) \\
  &= \E{\phi_1(x)}  -  \phi_1(\mu) 
\end{align*}

Note that  $\E{\phi_0(x)} = \E{\phi_0(\mu)}$ due to that $\E{x}=\mu$ and that $\phi_0(x)$ has only linear terms wrt $x$.
\end{proof}

\begin{comment}
\begin{lem}\label{lemExpBetaXXX} 
Expectation of beta divergence is 
\begin{align}\label{expBeta}
  \E{d_\beta(x,\mu)} &= \E{\phi(x)} - \phi(\mu) 
 % &= \E{ \LL{x}{x}} - \LL{\mu}{\mu} 
\end{align}
\begin{proof}
 By taking the expectation of Bregman divergence
\begin{align}
  \E{d_\beta(x,\mu)} &= \E{\phi(x) - \phi(\mu) - (x-\mu)\phi'(\mu)} \\
  &= \E{\phi(x)} - \phi(\mu) - \E{(x-\mu)}\phi'(\mu) \\
  &= \E{\phi(x)} - \phi(\mu) 
\end{align}
%Then we use the connections between $\phi$ and $\mathcal{L}$.
\end{proof}
\end{lem}
\end{comment}

\begin{cor}
Expected beta divergence is equal to Jensen gap for the dual cumulant function $\phi_1$ as
\begin{align}
   \E{d_\beta(x,\mu)} = \E{\phi_1(x)} - \phi_1( \E{x}) 
\end{align}

\end{cor}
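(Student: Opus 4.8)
The plan is to obtain this Corollary as an immediate specialization of Lemma \ref{lemExpBeta} rather than as a fresh computation. That lemma has already established
\begin{align*}
  \E{d_\beta(x,\mu)} = \E{\phi_1(x)} - \phi_1(\mu),
\end{align*}
so the only task remaining is to recognize the second term on the right-hand side as $\phi_1(\E{x})$ and then to name the resulting difference.

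First I would recall that throughout this dispersion-model setting $\mu$ denotes the expectation (mean) parameter, which is the first cumulant tied to $\theta$ through the mean-value mapping \eqref{eq.expectation}. Consequently $\E{x} = \mu$ by definition of the model; this is precisely the identity that was already invoked at the end of the proof of Lemma \ref{lemExpBeta} (where it was used to argue $\E{\phi_0(x)} = \E{\phi_0(\mu)}$). Substituting $\mu = \E{x}$ into the lemma's conclusion yields
\begin{align*}
  \E{d_\beta(x,\mu)} = \E{\phi_1(x)} - \phi_1(\E{x}),
\end{align*}
which is exactly the asserted equality.

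Finally I would observe that the right-hand side is, by definition, the Jensen gap of the convex function $\phi_1$, namely the quantity $\E{\phi_1(x)} - \phi_1(\E{x})$ whose nonnegativity is guaranteed by Jensen's inequality for convex $\phi_1$. This both supplies the stated interpretation and re-confirms that the expected beta divergence is nonnegative, consistent with $d_\beta \geq 0$. The only point that genuinely requires care — and hence the main, albeit minor, obstacle — is justifying the substitution $\E{x} = \mu$; I would stress that this is not an additional hypothesis but a structural property of the exponential dispersion model, so the Corollary rests on nothing beyond Lemma \ref{lemExpBeta} together with the defining role of $\mu$ as the mean parameter.
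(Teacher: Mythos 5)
Your proposal is correct and follows essentially the same route as the paper: both rest on Lemma \ref{lemExpBeta} together with the identity $\E{x}=\mu$ (the defining property of the mean parameter), the paper merely phrasing the step as ``the Jensen gap $\E{\phi_1(x)} - \phi_1(\E{x}) = \E{\phi_1(x)} - \phi_1(\mu)$ is the expected beta divergence of \eqref{expBeta}'' while you perform the substitution in the opposite direction. Your added emphasis that $\E{x}=\mu$ is structural rather than an extra hypothesis, and that Jensen's inequality re-confirms nonnegativity, is a fair elaboration of what the paper leaves implicit.
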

\begin{proof}
 Jensen inequality for the convex function $\phi_1$ is
\begin{align}
   \E{\phi_1(x)} \geq \phi_1( \E{x}) = \phi_1(\mu) 
\end{align}
where the gap is the expected beta divergence given in \eqref{expBeta}. 

\end{proof}

\begin{exam}
Expectation of beta divergence for Tweedie models is computed by using Lemma \ref{lemExpBeta} as follows.

For Tweedie models non-linear part of dual cumulant function  $\phi_1(x)$ is 
\begin{align}
  \phi_1(x) &=    \frac{x^{2-p}}{(1-p)(2-p)}  
\end{align}
 Hence, expected \Beta divergence for Tweedie models is as
\begin{align}
  \E{d_\beta(x,\mu)} &=  \frac{ \E{x^{2-p}}  -  \mu^{2-p}}{(1-p)(2-p)} 
\end{align}
For the limits we can either use l'Hopital or simply apply non-linear part of dual cumulant function $\phi_1(x)$ as given in Example \ref{exDualCumulant1} that results to the following special case for $p=0$ and limits for $p=1,2$
\begin{align*}
      \E{d_\beta(x,\mu)} = \left\{ \begin{array}{ll}      
    \frac{1}{2} \left( \E{ x^2}  - \mu^2  \right)  &p=0  \\
     \E{ x \log x}  - \mu \log \mu      &p=1  \\
      -\E{  \log x}  +  \log \mu  &p=2\\
 \end{array}\right.
\end{align*}
where for $p=0$ it is equal to $\frac{1}{2} \sigma^2$.
\end{exam}

Clearly, expectation of beta divergence has connection to entropy, and hence using this expectation we can write entropy for dispersion models 
\begin{align}\label{entropy22}
  \HE{x}{\mu} &= -\E{\log g(x,\varphi)} + \varphi^{-1} \E{d_\beta(x,\mu)} 
  %&= -\E{\log g(x,\varphi)} + \varphi^{-1} \left\{ \E{\phi(x)} - \phi(\mu) \right\} 
\end{align} 
  
\begin{comment}

\Note{

\begin{align}
    \E{d_\beta(x,\mu)} &= \varphi  \left\{\HE{x}{\mu} + \E{\log g(x,\varphi)} \right\}  
\end{align} 

Gauss için:
$\varphi=\sigma^2$
\begin{align*}
    \E{d_\beta(x,\mu)} &= \sigma^2  \left\{\HE{x}{\mu} + \E{\log g(x,\sigma^2)} \right\}  \\
    &= \sigma^2  \left\{ \frac{1}{2} +	\frac12 \log(2 \pi  \, \sigma^2) + \E{\log (2\pi\sigma^2)^{-\frac{1}{2}}} \right\}  \\
    &= \sigma^2  \left\{	 \frac{1}{2} + \frac12 \log(2 \pi  \, \sigma^2) -\frac{1}{2} \log (2\pi\sigma^2) \right\} \\ 
        &= \frac{1}{2} \sigma^2 	 
\end{align*} 

For gamma
\begin{align}
  \E{\log g(\cdot)} = \E{ \log \frac{x^{-1} a^{a} \exp(- a) }{\Gamma(a)}} \\
  = -\E{ \log x }  + \log \frac{ a^{a} \exp(- a) }{\Gamma(a)}
\end{align}

\begin{align}
 H = 	 a  - \log b \,+\, \log \Gamma(a)  \,+\, (1 \,-\, a)\psi(a) 
\end{align}

Using $\E{ \log x } = \psi(a)-\log b$
\begin{align*}
X
  &= 
  -\E{ \log x }   + \log \frac{ a^{a} \exp(- a) }{\Gamma(a)} + \\ 
  &  	 a  - \log b + \log \Gamma(a)  + (1 - a)\psi(a)  \\
   &=  -\E{ \log x }   + a \log  a      	- \log b   + (1 - a)\psi(a)  \\
   &=  -\E{ \log x }   + a \log  a      	- \log b   + (1 - a)( \E{\log x} + \log b)  \\
    &=  -\E{ \log x }   + a \log  a      	- \log b   + \E{\log x} + \log b \\
    & -a \E{\log x} - a\log b  \\
    &=   a \log  a   -a \E{\log x} - a\log b  \\
\end{align*}
Here $\varphi=1/a$.
 \begin{align}
   X \varphi &=   \log  a   - \E{\log x} - \log b \\
   &= - \E{\log x} + \log (a/b) = - \E{\log x} + \log \mu 
 \end{align}
 }
\end{comment}   

\begin{exam}

Equation \eqref{entropy22} gives another way of computing the entropy as given in these examples. For the Gaussian, the dispersion is $\varphi=\sigma^2$ and 
\begin{align}
   \E{\log g(x,\sigma^2)} &= \E{\log (2\pi\sigma^2)^{-\frac{1}{2}}} \\
   \E{d_\beta(x,\mu)}   &= \frac{1}{2} \sigma^2
\end{align}
and plugging them  in \eqref{entropy22} we end up with the entropy for the Gaussian as
\begin{align}
  \HE{x}{\mu} =  \frac{1}{2}  \log (2\pi\sigma^2) + \frac{1}{2}
\end{align}

For the gamma the dispersion is $\varphi=1/a$, the expectation parameter is  $\mu=a/b$ and the expectation $\E{ \log x } = \psi(a)-\log b$. Identifying the rest as
\begin{align*}
  \E{\log g(\cdot)} &= \E{ \log \frac{x^{-1} a^{a} \exp(- a) }{\Gamma(a)}} \\
  &= -\E{ \log x }  + \log \frac{ a^{a} \exp(- a) }{\Gamma(a)} \\
  \E{d_\beta(\cdot,\cdot)} &= -\E{\log x} + \log \mu = -\psi(a)+\log b + \log (a/b)
\end{align*}
and plugging them all in \eqref{entropy22} we end up with the entropy for the Gamma as
\begin{align}
 \HE{x}{\mu} = 	 a  - \log b \,+\, \log \Gamma(a)  \,+\, (1 \,-\, a)\psi(a) 
\end{align}

\end{exam}

\section{Conclusion}

\begin{table}[!t]
\setlength{\extrarowheight}{1pt} 
\renewcommand{\arraystretch}{1.5}
\caption{Divergences and related quantities represented as a function of variance functions in integral forms. }
\label{tab.summary}
\centering
\begin{tabular}{l|l|l}\hline
{\bf } & {\bf Integration} & {\bf Range}  \\ \hline 
Dual cumulant  of $x$  & $\phi(x)=d_\beta(x|\mu_0) = \int_{\mu_0}^x \frac{x-t}{v(t)} \partiald t$ & $[\mu_0,x] $   \\ \hline
Canonical parameter  & $\theta(\mu)=\int_{\mu_0}^\mu \frac{1}{v(t)} \partiald t $  & $[\mu_0,\mu] $   \\ \hline
Cumulant   & $\psi\big(\theta(\mu)\big)=\int_{\mu_0}^\mu \frac{t}{v(t)} \partiald t $  & $[\mu_0,\mu] $    \\ \hline
Dual cumulant of $\mu$  & $\phi(\mu)=d_\beta(\mu|\mu_0) = \int_{\mu_0}^\mu \frac{\mu-t}{v(t)} \partiald t$ & $[\mu_0,\mu] $  \\ \hline
Beta divergence &   $ d_\beta(x,\mu) = \int_\mu^x \frac{x-t}{v(t)} \partiald t$ & $[\mu,x] $     \\ \hline 
Alpha divergence & $d_\alpha(x,\mu) = \int_\mu^x  \psi(\theta(x/t))  \partiald t$ &  $[\mu,x] $  \\ \hline
Log likelihood & $\LL{x}{\mu} = \int_{\mu_0}^\mu \frac{x-t}{v(t)} \partiald t $ & $[\mu_0,\mu] $  \\ \hline
Full log Likelihood & $\LL{x}{x} = \int_{\mu_0}^x \frac{x-t}{v(t)} \partiald t $ & $[\mu_0,x] $    \\ \hline

\end{tabular}
\end{table}

The main idea presented in this paper is one-to-one mapping between beta divergence and (half of) the statistical deviance. This simple idea has many consequences. First, density of dispersion models can be expressed as a function of beta divergence. Second,  beta divergence formulation that is linked to power functions can be generalized in the form of compact definite integrals based on variance functions of dispersion models. Third, many properties of beta divergence such as scaling, translation, expectation and even an alternative geometric interpretation can be obtained via this integral form. Even further alpha divergence can be represented similarly and its connection to beta divergence can be obtained after simple mathematical operations. Table \ref{tab.summary}
summarizes the formulations for functions presented in the paper.

\section{Appendix - Alpha Divergence}

Likewise beta divergence, another specialized divergence is alpha divergence which is  a type of the $f$-divergence \cite{amari1985}. It has strong connection to beta divergence, and hence here as we show  it enjoys similar compact integral representation as beta divergence. Before that we introduce $f$-divergence briefly. 

\subsection{$f$-Divergence}

The {\it $f$-divergences} are generalized KL divergences, and are  introduced independently by authors Csiszár \cite{csiszar1963}, Morimoto \cite{morimoto1963} and Ali \& Silvey \cite{ali1966} during 1960s. By definition, for any real valued convex function $f$  the $f$-divergence
is defined as \cite{csiszar1963}
\begin{align}
   d_f(x,\mu) =  \mu f(\frac{x}{\mu}) \qquad \text{with} \qquad f(1)  = 0
\end{align}
For the setting $x=1$, the divergence $d_f(1,\mu)$ becomes only a function of $\mu$
\begin{align}
  f^*(\mu) = \mu f(1/\mu)
\end{align}
where $f^*$ is called as Csiszár dual of the function $f$.

Likewise the Bregman divergence $f$-divergences are non-negative quantities as $d_f(x,\mu) \geq 0$ and iff  $d_f(x,x) = 0$. As a special case, Hellinger distance is a type of symmetric alpha divergence with $p=3/2$ that exhibits metric properties.   

\subsection{Alpha Divergence}

In the literature alpha divergence has many different forms \cite{amari1985,zhu19995,liese2006, cichocki11} where all are equivalent. The one that index variable aligns with Tweedie models index parameter is given in \cite{Yilmaz2012} as
\begin{align}
   d_\alpha(x,\mu) =  \frac{ x^{2-p} \mu^{p-1}}{(1-p)(2-p)}  -\frac{x}{1-p}  + \frac{\mu}{2-p}     
\end{align}
Here by changing $p=2-\alpha$ we obtain alpha divergence form given in \cite{cichocki11} whereas with $p=(\alpha+3)/2$ we obtain Amari alpha that generates another form given in \cite{amari10}. Likewise, for $p=2-\delta$, it turns to $\delta$-divergence, that is identical to alpha divergence with $\delta$ as the index parameter \cite{zhu19995}.

$f$-divergence is specialized to alpha divergence when dual cumulant function $\phi$ is used.

\begin{defn}
Let $x,\mu \in \Omega$. Alpha divergence of $x$ from $\mu$, denoted by $d_\alpha(x,\mu)$
is the $f$-divergence generated by the dual cumulant function $\phi$ induced by variance function $v(\mu)$.
\end{defn}

$f$-divergence requires that the function $f$ provides  $f(1)=0$. The dual cumulant function  $\phi$ accomplishes this by choosing the base lower bound as $\mu_0=1$ so that the function becomes
\begin{align}
  \phi(\mu) &=  \int_{1}^\mu  \frac{\mu-t}{v(t)}    \partiald t 
\end{align}
In this way, the function $\phi$  provides that $\phi(1) = 0$ as well as $\phi'(1) = 0$. 
%where this condition  is needed for normalization. 

\begin{lem}
Alpha divergence $d_\alpha(x,\mu)$  is equal to 
\begin{align}
  d_\alpha(x,\mu)  =  \mu \int_1^{x/\mu} \frac{(x/\mu)-t}{v(t)} \partiald t
\end{align} 
\end{lem}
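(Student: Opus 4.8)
The plan is to read the result straight off the definition of the $f$-divergence, using the dual cumulant function as the generating function. By definition $d_f(x,\mu)=\mu f(x/\mu)$, and here the generating convex function is $f=\phi$, the dual cumulant function with base $\mu_0=1$. The normalization $f(1)=\phi(1)=0$ that the $f$-divergence requires is exactly what the choice $\mu_0=1$ secures, as recorded just above the statement. Hence I would begin by writing
\begin{align*}
  d_\alpha(x,\mu) = \mu\,\phi\!\left(\frac{x}{\mu}\right).
\end{align*}

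Next I would substitute the integral representation of the dual cumulant function. With $\mu_0=1$ the earlier lemma gives $\phi(s)=\int_1^{s}\frac{s-t}{v(t)}\partiald t$ for any argument $s$; evaluating this at $s=x/\mu$ and multiplying by $\mu$ yields
\begin{align*}
  d_\alpha(x,\mu) = \mu\int_1^{x/\mu}\frac{(x/\mu)-t}{v(t)}\partiald t,
\end{align*}
which is precisely the claimed form. No change of variable or reordering of integrals is needed; the whole argument reduces to a single substitution once the identification $f=\phi$ has been made.

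Because the computation is immediate, there is no genuine obstacle here; the only point worth verifying is that the abstract formula reproduces the known Tweedie alpha divergence. As a sanity check I would insert the Tweedie dual cumulant $\phi(s)=\frac{s^{2-p}}{(1-p)(2-p)}-\frac{s}{1-p}+\frac{1}{2-p}$ into $\mu\,\phi(x/\mu)$; collecting the powers of $\mu$ then recovers $\frac{x^{2-p}\mu^{p-1}}{(1-p)(2-p)}-\frac{x}{1-p}+\frac{\mu}{2-p}$, matching the closed form quoted at the start of this appendix and confirming that the substitution is carried out correctly.
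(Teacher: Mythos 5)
Your proof is correct and follows exactly the paper's own argument: specializing the $f$-divergence definition $d_f(x,\mu)=\mu f(x/\mu)$ with $f=\phi$ (the dual cumulant function based at $\mu_0=1$, which guarantees $\phi(1)=0$) and then substituting the integral form of $\phi$. The Tweedie sanity check is a nice addition but not needed; the core argument matches the paper's one-line proof.
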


\begin{proof}
By simply specializing the $f$-divergence definition $d_f(x,\mu) = \mu f(x/\mu)$ as
\begin{align}
  d_\alpha(x,\mu) = \mu \phi(x/\mu) 
\end{align}
\end{proof}

In the following we show well known symmetry condition for alpha divergence.

\begin{cor}
For functions that provide $\phi(r) = r \phi(1/r)$, alpha divergence is symmetric, i.e.
\begin{align}
d_\alpha(x,\mu) = d_\alpha(\mu,x)  \qquad \Rightarrow \qquad \phi(r) = r \phi(1/r)   
\end{align}

\end{cor}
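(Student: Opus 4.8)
The plan is to work directly from the lemma just proved, which expresses the alpha divergence in the scaled form $d_\alpha(x,\mu) = \mu\,\phi(x/\mu)$. First I would write out the original divergence and its argument-swapped counterpart side by side,
\begin{align*}
  d_\alpha(x,\mu) &= \mu\,\phi(x/\mu), \qquad d_\alpha(\mu,x) = x\,\phi(\mu/x),
\end{align*}
so that the symmetry requirement $d_\alpha(x,\mu) = d_\alpha(\mu,x)$ collapses into the single identity $\mu\,\phi(x/\mu) = x\,\phi(\mu/x)$.

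Next I would introduce the ratio variable $r = x/\mu$, observing that then $\mu/x = 1/r$, which turns the symmetry identity into $\mu\,\phi(r) = x\,\phi(1/r)$. Dividing both sides by the positive quantity $\mu$ and using $x/\mu = r$ yields $\phi(r) = r\,\phi(1/r)$, which is exactly the stated functional equation. Because the substitution $r = x/\mu$ is a bijection onto the relevant range of ratios and division by $\mu>0$ is reversible, every step runs equally well backwards; thus the functional equation $\phi(r) = r\,\phi(1/r)$ and the symmetry $d_\alpha(x,\mu) = d_\alpha(\mu,x)$ are in fact equivalent, and the corollary's stated implication is just one half of this equivalence.

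The result is essentially a one-line algebraic consequence of the $f$-divergence scaling $d_f(x,\mu) = \mu f(x/\mu)$, so I do not anticipate any genuine obstacle. The only point requiring care is purely domain bookkeeping: one must ensure $x,\mu \in \Omega$ with $\mu>0$ (and $x>0$ for the swapped form) so that the ratios $x/\mu$ and $\mu/x$ are well defined and the division is legitimate. Given the standing assumption that $\Omega$ is the convex support of the variable, this is immediate, and the claim follows.
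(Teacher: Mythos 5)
Your proposal is correct and follows essentially the same route as the paper: both start from the representation $d_\alpha(x,\mu) = \mu\,\phi(x/\mu)$ of the preceding lemma, equate it with the swapped form $x\,\phi(\mu/x)$, substitute a ratio variable (the paper uses $r=\mu/x$, you use $r=x/\mu$, which is immaterial since the functional equation is invariant under $r\mapsto 1/r$), and divide to obtain $\phi(r)=r\,\phi(1/r)$. Your explicit remark that every step is reversible is a small improvement, since it cleanly resolves the paper's ambiguity about which direction of the implication is being claimed.
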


\begin{proof}

\end{proof}

We want to find functions $\phi$ that generates symmetric alpha divergences such as
\begin{align}
d_\alpha(x,\mu) = d_\alpha(\mu,x)  
\end{align}
Then we plug the definition of alpha divergence as
\begin{align}
  \mu \int_1^{x/\mu} \frac{x/\mu -t}{v(t)} dt = x \int_1^{\mu/x} \frac{\mu/x -t}{v(t)} dt 
\end{align}
% SILME
%\begin{align}
%  \mu / x =  \left\{ \int_1^{\mu/x} \frac{\mu/x -t}{v(t)} dt \right\}\Big/
%  \left\{ \int_1^{x/\mu} \frac{x/\mu -t}{v(t)} dt \right\} 
%\end{align}
and then setting $r=\mu/x$ we identify the function $\phi()$
\begin{align*}
  r =  \frac{ \int_1^{r} \frac{r -t}{v(t)} dt }
  {\int_1^{1/r} \frac{1/r -t}{v(t)} dt } = 
  \frac{\phi(r)}{\phi(1/r)}   \qquad \Rightarrow \qquad \phi(r) = r \phi(1/r)
\end{align*}
 
\begin{exam}
For Tweedie models with $v(\mu)=\mu^p$ we find that 
\begin{align}
  &i) \quad \phi_p(\mu) = \mu \phi_p (1/\mu) \qquad \Rightarrow \qquad p=3/2 \\
  &ii) \quad \phi_p(\mu) = \mu \phi_q (1/\mu) \qquad \Rightarrow \qquad p+q=3 
\end{align}
\end{exam}

Alpha divergence has an interesting compact integral representation in terms of the canonical parameter and the cumulant function. In the following we first obtain integral representation of the cumulant function. 

\begin{lem}
The cumulant function is expressed in integral form  as 
\begin{align}
  \psi(\theta(\mu)) = \int_1^{\mu} \frac{t}{v(t)} \partiald t
\end{align}
\end{lem}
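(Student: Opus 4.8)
The plan is to derive this directly from the Legendre duality relation rather than from any intrinsic property of $\psi$. Recall from the dual cumulant identity \eqref{dualCum} that $\phi(\mu) = \mu\theta(\mu) - \psi(\theta(\mu))$, which rearranges to $\psi(\theta(\mu)) = \mu\theta(\mu) - \phi(\mu)$. This is exactly the substitution already exploited in the proof of the unit quasi-log-likelihood lemma, so it requires no new justification.

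First I would fix the lower bound at $\mu_0 = 1$, consistent with the base chosen throughout the alpha divergence section. With this choice the two building blocks are $\theta(\mu) = \int_1^{\mu} \frac{1}{v(t)} \partiald t$ and $\phi(\mu) = \int_1^{\mu} \frac{\mu-t}{v(t)} \partiald t$. Substituting both into the duality relation gives
\[
  \psi(\theta(\mu)) = \mu \int_1^{\mu} \frac{1}{v(t)} \partiald t - \int_1^{\mu} \frac{\mu-t}{v(t)} \partiald t .
\]
Second, since $\mu$ is constant with respect to the integration variable $t$, I would pull it inside the first integral and combine the two integrands over the common interval $[1,\mu]$. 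The numerator collapses as $\mu - (\mu - t) = t$, yielding $\int_1^{\mu} \frac{t}{v(t)} \partiald t$, which is the claimed form.

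There is essentially no obstacle here; the result is a one-line algebraic consequence of the integral representations already established for $\theta$ and $\phi$. The only point deserving a remark is the choice of lower limit: the same computation with a general $\mu_0$ would produce $\psi(\theta(\mu)) = \int_{\mu_0}^{\mu} \frac{t}{v(t)} \partiald t$, so the value $1$ is not essential to the identity but merely matches the normalization $\theta(1) = \phi(1) = 0$ used for alpha divergence. As an alternative route one could instead differentiate: the mean value mapping $\mu = \psi'(\theta)$ together with the chain rule gives $\frac{d}{d\mu}\psi(\theta(\mu)) = \psi'(\theta(\mu))\,\theta'(\mu) = \mu/v(\mu)$, and integrating from $1$ to $\mu$ while using $\psi(\theta(1)) = 0$ recovers the same integral by the fundamental theorem of calculus.
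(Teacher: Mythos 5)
Your proof is correct and follows exactly the paper's own route: rearranging the Legendre duality $\psi(\theta(\mu)) = \mu\theta(\mu) - \phi(\mu)$ and substituting the integral forms of $\theta$ and $\phi$ so the integrands combine to $t/v(t)$. The added remarks on the role of the lower bound $\mu_0=1$ and the alternative derivation via $\frac{d}{d\mu}\psi(\theta(\mu)) = \mu/v(\mu)$ are sound but not needed.
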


\begin{proof}
The cumulant can be found trivially using the duality as
\begin{align}
  \psi(\theta(\mu)) = \mu\theta(\mu) - \phi(\mu)
\end{align}
and plugging in the relevant terms we obtain the cumulant function
\begin{align}\label{alpha22}
\psi(\theta(\mu)) &= 
\mu\int_1^{\mu} \frac{1}{v(t)} \partiald t 
-  \int_1^\mu  \frac{\mu-t}{v(t)}    \partiald t 
= \int_1^{\mu} \frac{t}{v(t)} \partiald t 
\end{align}

\end{proof}
%We note that representation \eqref{alpha22}  easily obtains the initial condition as $\psi(\theta(1))=0$.

\begin{lem}

The \Alpha divergence  can also be expressed in terms of the cumulant function as
\begin{align}\label{lemAlpha2}
 d_\alpha(x,\mu) = \int_\mu^x  \psi(\theta(x/t))  \partiald t
\end{align}
\end{lem}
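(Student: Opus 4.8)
The plan is to reduce the identity to the integral form of the alpha divergence already established, $d_\alpha(x,\mu) = \mu\int_1^{x/\mu}\frac{(x/\mu)-s}{v(s)}\partiald s$, by expanding the right-hand side of \eqref{lemAlpha2} into a double integral and interchanging the order of integration, in the same spirit as the proof of the dual cumulant lemma.

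First I would substitute the integral representation of the cumulant from the preceding lemma, $\psi(\theta(r))=\int_1^{r}\frac{s}{v(s)}\partiald s$ with $r=x/t$, so that
\[
  \int_\mu^x \psi(\theta(x/t))\partiald t
  = \int_\mu^x\!\left(\int_1^{x/t}\frac{s}{v(s)}\partiald s\right)\partiald t .
\]
This is an iterated integral over the region $\{(t,s):\mu\le t\le x,\;1\le s\le x/t\}$. The central step—and the one I expect to be the main obstacle—is to change the order of integration. Using that $s\le x/t$ is equivalent to $t\le x/s$, and that $s$ attains its largest value $x/\mu$ at $t=\mu$, the region is re-expressed as $\{(t,s):1\le s\le x/\mu,\;\mu\le t\le x/s\}$. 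Getting these limits exactly right is the delicate part, precisely as in the dual cumulant lemma where the bounds were swapped carefully.

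After switching, the inner integral in $t$ is elementary,
\[
  \int_1^{x/\mu}\frac{s}{v(s)}\!\left(\int_\mu^{x/s}\partiald t\right)\partiald s
  =\int_1^{x/\mu}\frac{s}{v(s)}\!\left(\frac{x}{s}-\mu\right)\partiald s
  =\int_1^{x/\mu}\frac{x-\mu s}{v(s)}\partiald s ,
\]
and I would then identify the last expression with $\mu\int_1^{x/\mu}\frac{(x/\mu)-s}{v(s)}\partiald s$, which is exactly $d_\alpha(x,\mu)$ by the alpha divergence lemma, completing the argument. An equivalent route avoids Fubini: the substitution $u=x/t$ turns the target into $x\int_1^{x/\mu}\frac{\psi(\theta(u))}{u^2}\partiald u$, and integrating by parts with $\tfrac{d}{du}\psi(\theta(u))=u/v(u)$ and $\psi(\theta(1))=0$ yields $\mu\bigl[(x/\mu)\theta(x/\mu)-\psi(\theta(x/\mu))\bigr]=\mu\,\phi(x/\mu)$ by Legendre duality, giving the same result.
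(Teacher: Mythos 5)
Your main argument is exactly the paper's proof: expand $\psi(\theta(x/t))$ via its integral representation, interchange the order of integration over the region $\{\mu \le t \le x,\; 1 \le s \le x/t\}$ (the paper states the identical transformed bounds $1 < z < x/\mu$, $\mu < t < x/z$), and recover $\mu\,\phi(x/\mu) = d_\alpha(x,\mu)$ --- the paper just reads the same chain of equalities in the opposite direction. Your integration-by-parts alternative via $u = x/t$ is also correct and is a nice Fubini-free variant, but the core route coincides with the paper's.
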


\begin{proof}
We indeed prove that alpha divergence can be written as in the following
\begin{align*}
    d_\alpha(x,\mu) = 
   \mu \int_1^{x/\mu} \frac{(x/\mu)-t}{v(t)} \partiald t = \int_\mu^x  \left( \int_1^{x/t} \frac{z}{v(z)} \partiald z \right) \partiald t
\end{align*}
by changing the bounds for the variables in the double integral
\begin{align*}
  1 < z < x/t &\qquad \Rightarrow \qquad  1 < z < x/\mu \\
     \mu < t < x &\qquad \Rightarrow \qquad  \mu < t < x/z
\end{align*}
and then by changing the order of the integration. Then by definition the term inside the parenthesis turns to $\psi(\theta(x/t))$. 

\end{proof}

%\Note{'\Alpha divergence for Tweedie models' ne demek?  Tweedie PVF ilişkisinden belki alpha div for PVF denebilir mi?}

\begin{exam}
We obtain \Alpha divergence for Tweedie models using directly by  \eqref{lemAlpha2}
\begin{align}
   d_\alpha(x,\mu) &= 
\int_\mu^x  \psi(\theta(x/t))  \partiald t 
   = \int_\mu^x  \frac{(x/t)^{2-p}-1}{2-p}  \partiald t 
\end{align}
where for Tweedie models $\psi(\theta(\mu))$ is
\begin{align*}
  \psi(\theta(\mu)) &= \int_1^{\mu} \frac{t}{t^p} \partiald t
 =  \frac{\mu^{2-p} -1}{2-p}   
\end{align*}

\end{exam}

\subsection{Connection of Alpha and Beta Divergences}

\begin{cor}
Alpha divergence can be written in terms of beta divergence $x/\mu$ from $1$ 
\begin{align}
  d_\alpha(x,\mu) = \mu d_\beta(x/\mu,1) 
  %= \mu \int_1^{x/\mu} \frac{(x/\mu)-t}{v(t)} \partiald t
\end{align}

\end{cor}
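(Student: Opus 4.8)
The plan is to obtain the identity directly from the structural facts already in hand, since nothing beyond bookkeeping is needed. First I would recall the way \Alpha divergence was constructed: as the $f$-divergence generated by the dual cumulant function $\phi$, which via the rule $d_f(x,\mu) = \mu f(x/\mu)$ gives the compact expression
\begin{align*}
  d_\alpha(x,\mu) = \mu\, \phi(x/\mu).
\end{align*}
Crucially, this construction fixes the base lower bound at $\mu_0 = 1$, the choice that enforces $\phi(1) = 0$ (and $\phi'(1)=0$) and thereby meets the $f$-divergence requirement $f(1)=0$.

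Next I would invoke the fact, noted earlier, that with $\mu_0 = 1$ the dual cumulant function is itself a beta divergence measured from the base point, namely
\begin{align*}
  \phi(r) = d_\beta(r,1) = \int_1^{r} \frac{r-t}{v(t)}\,\partiald t.
\end{align*}
Setting $r = x/\mu$ turns this into $\phi(x/\mu) = d_\beta(x/\mu,1)$, and substituting into the displayed formula for $d_\alpha$ yields $d_\alpha(x,\mu) = \mu\, d_\beta(x/\mu,1)$ immediately. Equivalently, I could match integral forms outright: the \Alpha divergence lemma reads $d_\alpha(x,\mu) = \mu \int_1^{x/\mu} \frac{(x/\mu)-t}{v(t)}\,\partiald t$, while feeding the arguments $x/\mu$ and $1$ into the beta integral form gives $d_\beta(x/\mu,1) = \int_1^{x/\mu} \frac{(x/\mu)-t}{v(t)}\,\partiald t$; the integrands coincide, so scaling the latter by $\mu$ reproduces the former.

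There is no real obstacle here, since the statement is little more than a repackaging of the two integral representations against the common kernel $1/v(t)$. The only point deserving attention is consistency of the lower bound: the \Alpha divergence framework forces $\mu_0 = 1$, and one must read $\phi$ as a beta divergence with that same base, which is automatic because $d_\beta(r,1)$ is by definition $\phi$ evaluated with $\mu_0 = 1$.
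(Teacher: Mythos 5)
Your proposal is correct and follows essentially the same route as the paper, which simply observes that the integral form of the alpha divergence, $\mu \int_1^{x/\mu} \frac{(x/\mu)-t}{v(t)}\, dt$, is the beta divergence $d_\beta(x/\mu,1)$ scaled by $\mu$. Your additional remarks on the base point $\mu_0=1$ and the identity $\phi(r)=d_\beta(r,1)$ just make explicit the bookkeeping the paper leaves implicit.
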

\begin{proof}
Integral form of \Alpha divergence trivially regarded as \Beta divergence scaled by $\mu$.
\end{proof}

The connection between \Beta and \Alpha divergences can be interpreted as Csizar's duality such that \Alpha divergence is Csizar dual of \Beta divergence implied by the definition.

\begin{cor}
Connection of alpha and beta divergences is given as
\begin{align}\label{conAlphaBeta}
  d_\beta(x,\mu) =  \frac{\mu}{f(\mu)} d_\alpha(x,\mu)
\end{align}

\end{cor}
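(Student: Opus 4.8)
The plan is to derive \eqref{conAlphaBeta} by combining two facts already established earlier in the paper: the special case of the scaling lemma evaluated at $c=\mu$, and the immediately preceding corollary that expresses alpha divergence through a beta divergence. Both of these independently feature the quantity $d_\beta(x/\mu,1)$, so the strategy is simply to eliminate this common factor between them and read off the proportionality constant.

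First I would recall the scaling lemma at $c=\mu$, which (using $\mu/\mu=1$) reads
\begin{align*}
  d_\beta(x,\mu) = \frac{\mu^2}{f(\mu)}\, d_\beta(x/\mu,1),
\end{align*}
with $f(\mu)=v(\mu t)/v(t)$. Next I would invoke the corollary $d_\alpha(x,\mu)=\mu\, d_\beta(x/\mu,1)$ and rearrange it to isolate the shared term as $d_\beta(x/\mu,1)=\mu^{-1} d_\alpha(x,\mu)$. Substituting this into the scaling identity yields
\begin{align*}
  d_\beta(x,\mu) = \frac{\mu^2}{f(\mu)}\cdot\frac{1}{\mu}\, d_\alpha(x,\mu) = \frac{\mu}{f(\mu)}\, d_\alpha(x,\mu),
\end{align*}
which is precisely the claimed relation after cancelling a single power of $\mu$.

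There is essentially no analytic obstacle here; the entire content is the observation that $d_\beta(x/\mu,1)$ is the common link between the scaling behaviour of beta divergence and its relationship to alpha divergence. The one point requiring care is purely bookkeeping of the factor $f(\mu)$: it must be the same function $f$ (the ratio $v(\mu t)/v(t)$ from the scaling lemma) in both the substituted identity and the final statement, so that the elimination of $d_\beta(x/\mu,1)$ is consistent. Once this is verified, the result follows immediately, and indeed it may be read as a restatement of the Csisz\'ar-duality remark made just above, now with the explicit constant $\mu/f(\mu)$ attached.
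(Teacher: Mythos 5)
Your proposal is correct and matches the paper's own proof exactly: both combine the scaling lemma at $c=\mu$, namely $d_\beta(x,\mu)=\frac{\mu^2}{f(\mu)}d_\beta(x/\mu,1)$, with the preceding corollary $d_\alpha(x,\mu)=\mu\, d_\beta(x/\mu,1)$, and eliminate the common factor $d_\beta(x/\mu,1)$ to obtain the constant $\mu/f(\mu)$. Your bookkeeping of $f$ as the ratio $v(ct)/v(t)$ is also consistent with the paper's usage, so nothing is missing.
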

\begin{proof}
Using the relations
\begin{align}
  i) &\quad d_\alpha(x,\mu) = \mu d_\beta(x/\mu,1)  \\
  %= \mu \int_1^{x/\mu} \frac{(x/\mu)-t}{v(t)} \partiald t 
  ii) &\quad
  d_\beta(x,\mu) =  \frac{\mu^2}{f(\mu)} d_\beta(x/\mu,1)
\end{align}
we obtain connection of alpha and beta given in \eqref{conAlphaBeta} where $f$ provides 
 \begin{align}
    f(c) =  \frac{v(ct)}{v( t)  }, \qquad f(1)=1  
\end{align}

\end{proof}

\begin{exam}
For Tweedie models with $v(\mu)=\mu^p$ the connection is
\begin{align}
  d_\beta(x,\mu) =  \mu^{1-p} d_\alpha(x,\mu)
\end{align}
since $f(\mu)=\mu^p$.
\end{exam}

\begin{cor}
Any variance function $v(\mu)=k\mu$, $k \in \mathbb{R}_+$ generates alpha and beta divergences that are equal in value 
\begin{align}
  d_\beta(x,\mu) =   d_\alpha(x,\mu) \quad \text{for} \quad v(\mu)=k\mu.
\end{align} 
\end{cor}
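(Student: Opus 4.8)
The plan is to read the result off the immediately preceding corollary, which states $d_\beta(x,\mu) = \frac{\mu}{f(\mu)} d_\alpha(x,\mu)$ with $f$ determined by the multiplicative decomposition $f(c) = v(ct)/v(t)$ subject to $f(1)=1$. The entire task therefore reduces to evaluating $f$ for the linear variance function $v(\mu)=k\mu$ and checking that the prefactor $\mu/f(\mu)$ collapses to unity.

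First I would compute $f$. Substituting $v(\mu)=k\mu$ gives $v(ct) = kct = c\,(kt) = c\,v(t)$, so the required decomposition $v(ct)=f(c)v(t)$ holds exactly with $f(c)=c$, independently of the base point $t$ and, crucially, independently of the scaling constant $k$, which cancels in the ratio. In particular $f(\mu)=\mu$, and this is the only computation the proof actually needs.

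Next I would substitute into the connection formula \eqref{conAlphaBeta}, obtaining $d_\beta(x,\mu) = \frac{\mu}{\mu}\, d_\alpha(x,\mu) = d_\alpha(x,\mu)$, which is the claim. There is essentially no hard step; the one point worth flagging is the verification that the special decomposition $v(ct)=f(c)v(t)$ is genuinely satisfied by linear variance functions, since this homogeneity is exactly the property that makes $f$ well defined in the scaling lemma. Linearity is precisely what forces $f$ to be a pure power $f(c)=c$ with $f(1)=1$, and no other shape of $v$ with nonzero intercept would preserve this.

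Finally I would remark, as a sanity check and for context, that this corollary is the natural generalization of the Tweedie relation $d_\beta(x,\mu)=\mu^{1-p}d_\alpha(x,\mu)$ evaluated at $p=1$: there $\mu^{1-p}=\mu^0=1$ already forces $d_\beta=d_\alpha$ for the Poisson case $v(\mu)=\mu$, and the present statement simply extends this equality from $k=1$ to the whole family $v(\mu)=k\mu$ with $k\in\mathbb{R}_+$, the constant being irrelevant because it divides out of every ratio in which $v$ appears.
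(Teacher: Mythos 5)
Your proposal is correct, and it uses the same key ingredient as the paper: the connection corollary \eqref{conAlphaBeta}, $d_\beta(x,\mu) = \frac{\mu}{f(\mu)}\, d_\alpha(x,\mu)$ with $f(c)=v(ct)/v(t)$. The difference is the direction in which the argument runs. You verify the implication exactly as the corollary states it: starting from $v(\mu)=k\mu$ you compute $v(ct)=kct=c\,v(t)$, hence $f(c)=c$, hence the prefactor $\mu/f(\mu)$ collapses to $1$. The paper instead runs the argument backwards: it imposes the desired equality $d_\beta=d_\alpha$, deduces $\mu/f(\mu)=1$, i.e.\ $f(\mu)=\mu$, and then solves the functional equation $c=v(c\mu)/v(\mu)$ to conclude $v(\mu)=k\mu$. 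Strictly read, the paper's proof establishes the converse --- that equality of the two divergences forces a linear variance function --- which is a characterization (uniqueness of the linear family) your argument does not attempt; conversely, your proof establishes precisely the stated sufficiency claim, which the paper's direction yields only once one observes its steps are reversible. So your version is the more faithful proof of the corollary as worded, while the paper's version buys the additional information that $v(\mu)=k\mu$ is the \emph{only} variance function with this property; your closing remark recovering the Poisson case $p=1$ from the Tweedie relation $d_\beta(x,\mu)=\mu^{1-p}d_\alpha(x,\mu)$ is consistent with the paper's own example and is a sound sanity check.
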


\begin{proof}
Alpha and beta divergences are equal when 
\begin{align}
  \frac{\mu}{f(\mu)}=1 \quad \Rightarrow \quad f(\mu)=\mu.
\end{align}
Then the variance functions turns to 
 \begin{align}
  f(c) =  \frac{v(c\mu)}{v( \mu)  }  \quad \Rightarrow \quad c =  \frac{v(c\mu)}{v( \mu)  } \quad \Rightarrow \quad v(\mu)=k\mu.
\end{align}
%for any  $k \in \mathbb{R}/\{0\}$.  
\end{proof}
For $k=1$, the variance function becomes $v(\mu)=\mu$ that corresponds to the Poisson distribution.

\bibliographystyle{IEEEtran}

% Generated by IEEEtran.bst, version: 1.13 (2008/09/30)

%\vspace{-14cm}
%\begin{IEEEbiography}[{\includegraphics[width=1in,height=1.25in,clip,keepaspectratio]{fig/kenan180x.jpg}}]%
\begin{IEEEbiography}[{\includegraphics[height=1in]{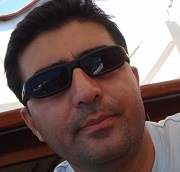}}]{Y. Kenan Y{i}lmaz} received the BS degree (1992) from the Department of Computer Engineering,  MS degree (1998) from the Institute of BioMedical Engineering and the PhD (2012) 
from the Department of Computer Engineering, all in Boğaziçi University in Istanbul. His PhD was on generalized tensor factorization. 
He is currently an IBM certified instructor for certain IBM products including Domino, Websphere Application Server and Portal Server.  
\end{IEEEbiography}

\end{document}